\newcommand{\bm}[1]{\mbox{\boldmath{$#1$}}}
\def\diag{\textrm{diag}}
\newtheorem{lemma}{Lemma}
\newtheorem{remark}{\bf{Remark}}
\begin{document}%
\begin{CJK}{UTF8}{gbsn}
\title{Nash Equilibrium Between Consumer Electronic  Devices and DoS Attacker for Distributed IoT-enabled RSE Systems}
\author{\IEEEauthorblockN{Gengcan Chen, Donghong Cai,~\IEEEmembership{Member,~IEEE}, Zahid Khan,~\IEEEmembership{Senior Member,~IEEE}, Jawad Ahmad,~\IEEEmembership{Senior Member,~IEEE}, and Wadii Boulila,~\IEEEmembership{Senior Member,~IEEE}}
\thanks{G. Chen and D. Cai are with the College of Cyber Security, Jinan University, Guangzhou 510632, China (e-mail: gengcanchen@qq.com; dhcai@jnu.edu.cn).
%
}
\thanks{Z. Khan and Wadii Boulila are with the Robotics and Internet-of-Things Laboratory, Prince Sultan University, Riyadh, Saudi Arabia (e-mail: zskhan@psu.edu.sa;wboulila@psu.edu.sa)}
\thanks{Jawad Ahmad is with the Cybersecurity Center, Prince Mohammad Bin Fahd University, Alkhobar, Saudi Arabia.
(Email: jahmad@pmu.edu.sa). }
%

}
 \vspace{1.5em}

\maketitle
\vspace{-2.0em}

\begin{abstract}
In electronic consumer Internet of Things (IoT), {consumer electronic devices as edge devices require less computational overhead and} the remote state estimation (RSE) of consumer electronic devices is always at risk of denial-of-service (DoS) attacks. Therefore, the adversarial strategy between consumer electronic devices and DoS attackers is critical. This paper focuses on the adversarial strategy between consumer electronic devices and DoS attackers in IoT-enabled RSE Systems. We first propose a remote joint estimation model for distributed measurements to effectively reduce consumer electronic device workload and minimize data leakage risks. The Kalman filter is deployed on the remote estimator, and the DoS attacks with open-loop as well as closed-loop are considered. We further introduce advanced reinforcement learning techniques, including centralized and distributed Minimax-DQN, to address high-dimensional decision-making challenges in both open-loop and closed-loop scenarios. Especially, the Q-network instead of the Q-table is used in the proposed approaches, which effectively solves the challenge of Q-learning. Moreover, the proposed distributed Minimax-DQN reduces the action space to expedite the search for Nash {Equilibrium} (NE). The experimental results validate that the proposed model can expeditiously restore the RSE error covariance to a stable state in the presence of DoS attacks, exhibiting notable attack robustness. The proposed centralized and distributed Minimax-DQN effectively resolves the NE in both open and closed-loop case, showcasing remarkable performance in terms of convergence. It reveals that substantial advantages in both efficiency and stability are achieved compared with the state-of-the-art methods.
\end{abstract}
 \vspace{-0.5em}
\begin{IEEEkeywords}
	IoT-enabled RSE, consumer electronic device, DoS attack, Minimax-DQN, Nash equilibrium.
\end{IEEEkeywords}

 \vspace{-0.5em}
\section{Introduction}
\vspace{0.5em}
With the rapid development of the Internet of Things (IoT), communication networks support massive communication devices or consumer electronic access{\cite{cai2025opensetrffingerprinting}}. The IoT-enabled machine-type communication integrates devices, instruments, vehicles, buildings, and other items embedded with electronics, circuits, software, sensors, and network connectivity\cite{gokhale2018introduction}, which has been widely used in key infrastructure such as public transport networks{\cite{10004210}}, national power grid, and intelligent transportation\cite{mo2011cyber,seiler2020flow}. In the era of Industry 5.0, it has become the focus of attention, especially the wide application of consumer electronic devices, such as wearable sensors and digital cameras, and the building of electronic consumer IoT. However, the electronic consumer IoT involves a lot of personal privacy information vulnerable to malicious network attacks during transmission, which may cause huge loss of life and property\cite{8688434}, and bring great challenges to cyberspace security\cite{nguyen2015survey,khan2018iot}. Moreover, consumer electronic devices have limited battery capacity and computing power. In order to extend battery life and complete complex computing tasks, they need to collaborate with edge servers.

In the electronic consumer IoT, remote state estimation (RSE) is an important task due to limited battery capacity and computing power. However, {advancements in secure communication channels and adaptive access protocols, modern cyber-physical systems remain vulnerable to two predominant cybersecurity threats: spoofing attacks and denial-of-service (DoS) attacks}\cite{cardenas2008secure}. The spoofing attacks can be explained as network attackers transmitting information to edge servers using fake identities. The DoS attacks affect the estimation performance of the system by blocking the wireless channel and causing packet loss\cite{ding2017multi,li2015jamming,qin2017optimal}.
Some anomaly detection methods have been proposed for spoofing attacks in electronic consumer IoT. For example, a stochastic detector with a random threshold for the remote estimator is used to determine whether the received data is correct\cite{7799237}, and $\chi^{2}$ detector is proposed in \cite{7805147}. Besides, some schemes have been proposed to further resist spoofing attacks, including watermark-based KL divergence detector\cite{9142391} and encoding scheme based on pseudo-random numbers\cite{9775725}.

On the other hand, {most of the existing work} has studied the DoS attack in the electronic consumer IoT\cite{6760746,7225119,7172466,ZHANG201252,https://doi.org/10.1002/asjc.1441}. In the initial research of DoS attack, the main achievement is to propose the optimal attack strategy of the attacker\cite{6760746,7225119,7172466} or to propose the detection method of DoS attack\cite{ZHANG201252,https://doi.org/10.1002/asjc.1441}. However, due to both the consumer electronic device and the attacker modifying their strategy to achieve their goal in practice, the traditional methods of thinking only from one side of the problem are obviously not very practical. Therefore, it is an inevitable trend to study the DoS attack in the electronic consumer IoT by considering the strategies of both the attacker and the consumer electronic device. The study of Nash {Equilibrium} (NE) strategy based on offensive and defensive antagonism is mainly carried out under the framework of game theory\cite{li2015jamming,li2016sinr,ding2017multi,ding2018attacks}. Game theory is an appropriate tool for analyzing the dynamic interplay between the consumer electronic device and the attacker due to its ability to capture the adversarial and strategic nature of the situation. {Especially, Wang et al.\cite{WANG2023119134} constructed a Stackelberg game to describe a DoS attack on a normal user launched by several hackers and solved the Stackelberg equilibrium.} In a remote state estimation system, Ding et al. \cite{ding2017multi} proposed a zero-sum game framework for determining the optimal transmission power strategy between the sensor device and the attacker in a wireless communication system based on the respective potential return function.

 Searching for NE in the framework of zero-sum game theory is a challenge. For small-scale games, simple enumeration, line, or arrow methods can be used to solve the problem. Further, Lemke et al. \cite{doi:10.1137/0112033} proposed the Leke-Howson algorithm, which solves the worst-case NE with exponential time complexity. Lipton et al. \cite{10.1145/779928.779933} proposed an approximation algorithm: Lipton-Markakis-Mehta algorithm. In order to solve the distributed NE search problem that may be subjected to denial-of-service attacks in network games, \cite{QIAN2023119080} proposed a fully distributed elastic NE search strategy based on nodes, which can adaptively adjust its control gain according to the local adjacency information of each participant. In addition, \cite{10337768} investigated an NE-seeking algorithm to maintain the resilience of systems subject to DoS attacks and interference in multi-agent systems. \cite{sayin2021decentralized} studied multi-agent reinforcement learning in infinite horizon discounted zero-sum Markov games and developed fully de-coupled Q-learning dynamics. However, these methods have obvious limitations in dealing with complex, changeable, and incomplete information games. Recently, reinforcement learning technology has provided new ideas and methods for resolving the problem of NE. Reinforcement learning, a widely utilized learning method in artificial intelligence\cite{mnih2013playing}, is introduced into the search for the optimal strategy and the equilibrium strategy of the game. In the existing articles, Q-learning is used to find NE based on reinforcement learning under the framework of game theory\cite{li2016sinr,ding2018attacks,ding2020defensive,dai2020distributed}. However, {in the infinite time state game, it is necessary to build a huge Q table. It is a challenge for the preservation of high-dimensional data, which is not a friendly solution for the real complex environments}. The recent advancements in Deep Reinforcement Learning (DRL), specifically Mnih et al.\cite{mnih2015human} proposed the Deep Q-Network (DQN) algorithm offer innovative solutions to similar challenges in the Markov decision process (MDP). Combining deep learning{\cite{10462488,shan2024unsupervised}}, using the neural network instead of Q-table, DQN has a good performance in solving the problem of Q-learning\cite{fan2020theoretical}. Therefore, DQN is commonly used to solve problems with higher dimensions and more states. For example, \cite{9627967} employed DQN instead of Q-learning to optimize the cooperative spectrum sensing system facing primary user emulation attacks. \cite{YANG2022279} developed DQN to solve the MDP problem in a scalable and model-free manner. The existing methods have proven that NE can be found for sensor device and DoS attackers\cite{dolk2016event,feng2020networked}, but they assume that the attacker knows the system parameters and the local sensor device and the DoS attacker is aware of other players' behavior. Moreover, previous research has assumed that the consumer electronic device possesses ample computing power and can transmit local state estimates to remote estimators. These assumptions may not be entirely realistic.

In order to overcome the above shortcomings, this paper introduces a distributed remote joint estimation model and proposes a Minimax-DQN algorithm to address the NE between consumer electronic devices and DoS attacks in IoT-enabled RSE Systems. This paper mainly studies the remote state joint estimation problem of the same target multi-consumer electronic device multi-channel under DoS attack. Centralized reinforcement learning in open-loop cases and distributed reinforcement learning in open-loop and closed-loop cases are discussed. The main contributions of this paper are as follows:
\begin{itemize}
  \item We design a remote joint estimation model for distributed measurements. Compared with the traditional target measurement using a single consumer electronic device, it has higher estimation precision and better estimation effect. In addition, the traditional remote state estimation is to deploy the Kalman filter locally in the consumer electronic device. However, we deploy the Kalman filter at one end of the remote estimator. It can not only reduce the calculation burden of consumer electronic devices but also reduce the data out of the domain. Therefore, it can reduce the risk of information leakage, and it is more suitable for the actual scene.
  \item Then, two methods including centralized and distributed Minimax-DQN algorithm, are proposed for the open-loop case, where the information between consumer electronic devices and the DoS attacker formulated as a Markov process is symmetric. Both methods employ Q-network instead of Q-table, which is more suitable for dealing with the game under a complex environment and continuing state compared with Q-learning. In addition, distributed Minimax-DQN narrows down the action space to expedite the search for NE.
  \item Furthermore, in order to be closer to the actual situation, a closed-loop case is investigated, where information between consumer electronic devices and the DoS attacker is asymmetrical. It is a partially observed MDP (POMDP), we convert it into a game based on belief states to solve the POMDP problem. Then, the distributed Minimax-DQN algorithm is employed to find the NE in a closed-loop case.
\end{itemize}

The subsequent section of this article is presented below. Section \ref{System Model} introduces the mathematical definition of the model and DoS attack and describes the problem. Section \ref{game theory} describes the framework of game theory under this model and the goal to achieve. In section \ref{open loop}, we propose the solution of NE by centralized Minimax-DQN and distributed Minimax-DQN under an open loop. In section \ref{close loop}, we apply distributed Minimax-DQN to find NE in the closed loop. Section \ref{simulation} presents the performance of the proposed method in various settings, along with a comprehensive analysis and experimental outcomes.  Finally, Section \ref{conclusion} summarizes this work.

\textit{Notations}: Let $\mathbb{N}$ and $\mathbb{R}$ denote the sets of nonnegative integers and real numbers, respectively. $\mathbb{R}^{n}$ is the n-dimensional Euclidean space. {$f_{2}\circ f_{1}(x)$} denotes the composition function $f_{2}(f_{1}(x))$. $\diag(e_{1},\dots,e_{n})$ represents the diagonal matrix with its diagonal elements varying from  $e_{1}$ to $e_{n}$. $\mathbb{E}[\cdot]$ represents the expectation of a random variable. $\mathbf{w}\sim \mathcal{N}(\mu, {\sigma^{2}})$ means that $\mathbf{w}$ follows a mean $\mu$ and a variance $\sigma^{2}$ Gaussian distribution. $\dag$ denotes the pseudo-inverse of a matrix.

\section{System Model and DoS Attacks}\label{System Model}
\subsection{Distributed IoT Model}
\vspace{1.0em}
As shown in Fig. \ref{system_model}, we consider the following linear discrete-time state estimation system with multiple consumer electronic devices:
\begin{equation}\label{esf}
\mathbf{x}_{k+1}=\mathbf{A}\mathbf{x}_{k}+\mathbf{w}_{k},
\end{equation}
\begin{equation}\label{asdf}
y_{i,k}=\mathbf{C}_{i}\mathbf{x}_{k}+v_{i,k},
\end{equation}
where $\mathbf{x}_{k} \in\mathbb{R}^{M}$ is the state of the process at the $k$-th time slot, $\mathbf{A}\in\mathbb{R}^{M\times M}$ is a state transition matrix. Each consumer electronic device $i$ measures the state of the same process, and the local measurement output of consumer electronic device $i$ is {denoted} as $y_{i,k}\in\mathbb{R}$. $\mathbf{C}_{i}\in\mathbb{R}^{1\times M}$ is a local measure matrix of the $i$-th consumer electronic device. Moreover, $\mathbf{w}_{k}\sim \mathcal{N}(\mathbf{0}, \mathbf{Q})\in \mathbb{R}^{M}$ and $v_{i,k}\sim\mathcal{N}(0,R_{i})\in\mathbb{R}$ are the corresponding {Additive white Gaussian noises} (AWGNs) for process and measurement. We assume the initial state $\mathbf{x}_{0}\sim\mathcal{N}(\mathbf{0},\bm{\Pi}_{0})$ with $\bm{\Pi}_{0}\in\mathbb{R}^{M\times M}$, which is not influenced by $\mathbf{w}_{k}$ and $v_{i,k}$. Multiple consumer electronic devices are used to measure the same process, and the measurement results are transmitted to the remote estimator through wireless channels, which will effectively improve the accuracy of the system measurement. In the process, it is assume that the tuple$(\mathbf{A},\mathbf{Q}^{1/2})$ can be stabilized and the pair$(\mathbf{A},\mathbf{C})$ satisfies the observability condition, where $\mathbf{C} \triangleq [\mathbf{C}_{1}^{\top}, \ldots ,\mathbf{C}_{n}^{\top}]^{\top}$.

\begin{figure}
\includegraphics[width=1\linewidth]{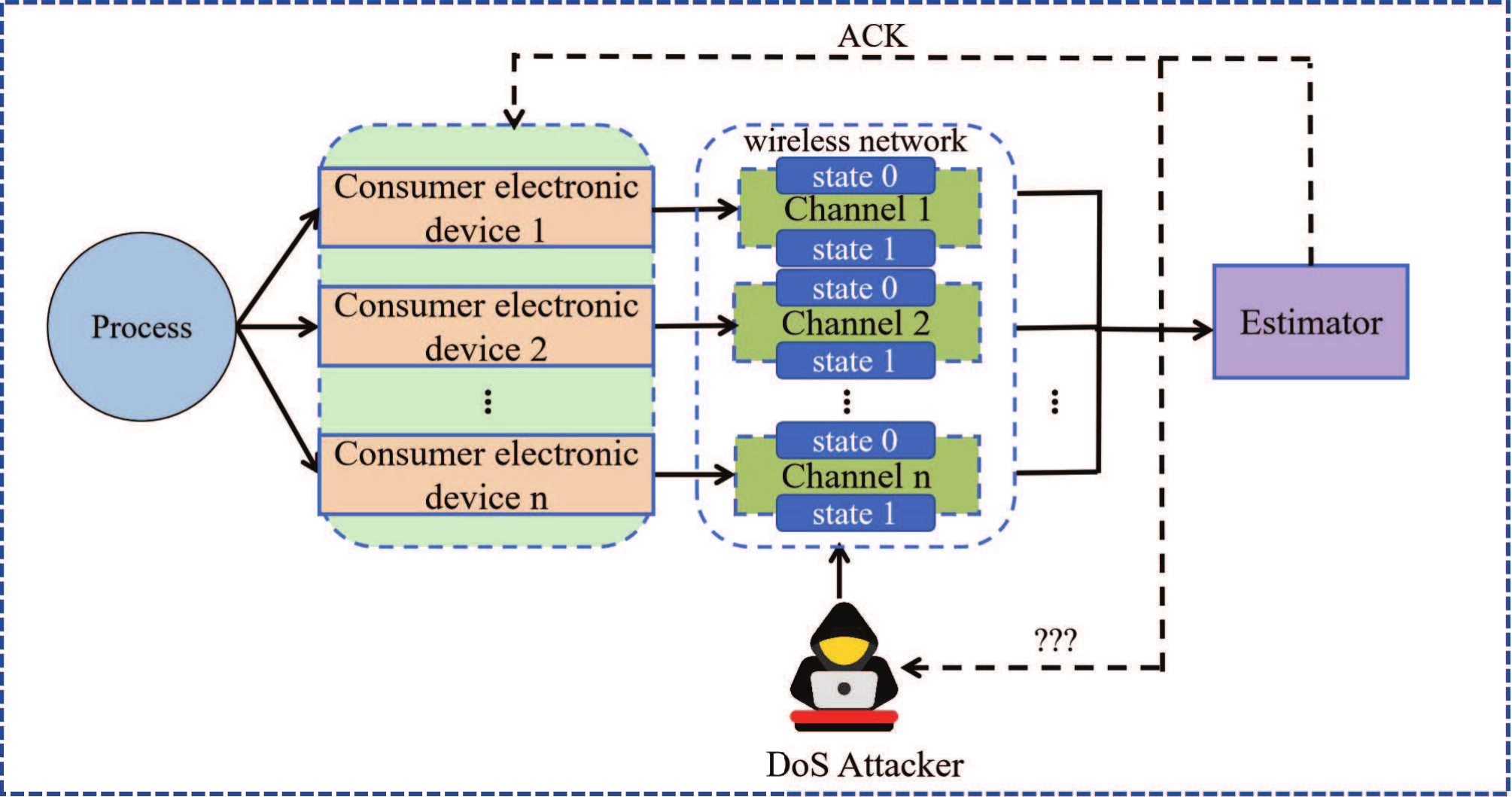}
\caption{Remote State Estimation of Distributed IoT with DoS Attacker.} \label{system_model}
\end{figure}

\subsection{Remote State Estimation}
{Assuming consumer electronic devices are equipped with embedded chips and sufficient battery capacity to support basic computational tasks}. For the $k$-th time slot, the consumer electronic device {$i\in \{1,2,\cdots,n\}$}, makes a raw measurement $y_{i,k}$ and process it to get the innovation:
\begin{equation}
z_{i,k} \triangleq y_{i,k}-\mathbf{C}_{i}\hat{\mathbf{x}}_{k-1}^{-},
\end{equation}
where $\hat{\mathbf{x}}_{k-1}^{-}$ is the feedback from the remote estimator at the previous slot, and $z_{i,k}$ is sent to the remote estimator through channel $i$ in wireless network\cite{7470566}. Especially, the properties of $z_{i,k}$ are given by Lemma 1.
\begin{lemma}\label{lemma1}(\cite{anderson1979optimal}) The innovation $z_{i,k}$ has the following properties:
\begin{enumerate}
  \item $z_{i,k}$ follows zero-mean Gaussian distribution;
  \item $\Sigma_{z_{i}}\triangleq {\mathbb{E}}[z_{i,k}z_{i,k}^{T}]=\mathbf{C}_{i}\bar{\mathbf{P}}_{k}\mathbf{C}_{i}^{T}+R_{i}$;
  \item $z_{i,k}$ is independent of $z_{i,h}$, for $\forall h<k$.
\end{enumerate}
\end{lemma}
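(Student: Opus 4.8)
The plan is to recognize Lemma~\ref{lemma1} as the standard orthogonality and whiteness property of the Kalman innovation sequence, and to prove it by exploiting the linear--Gaussian structure of \eqref{esf}--\eqref{asdf} together with the fact that the feedback $\hat{\mathbf{x}}_{k-1}^{-}$ is the minimum mean-square-error (equivalently, conditional-mean) one-step predictor of $\mathbf{x}_{k}$ given the measurements collected up to slot $k-1$. First I would fix notation by writing $\tilde{\mathbf{x}}_{k}^{-} \triangleq \mathbf{x}_{k} - \hat{\mathbf{x}}_{k-1}^{-}$ for the prediction error, so that $\bar{\mathbf{P}}_{k} = \mathbb{E}[\tilde{\mathbf{x}}_{k}^{-}(\tilde{\mathbf{x}}_{k}^{-})^{\T}]$ by definition of the prediction error covariance, and then substituting \eqref{asdf} into the definition of $z_{i,k}$ to obtain the identity
\[
z_{i,k} = \mathbf{C}_{i}\mathbf{x}_{k} + v_{i,k} - \mathbf{C}_{i}\hat{\mathbf{x}}_{k-1}^{-} = \mathbf{C}_{i}\tilde{\mathbf{x}}_{k}^{-} + v_{i,k}.
\]
This decomposition of the innovation into a "prediction-error part" and a "fresh measurement-noise part" is the workhorse for all three claims.

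For claim~1 I would argue by induction on $k$ that $\mathbf{x}_{k}$, the stacked past measurements, and $\hat{\mathbf{x}}_{k-1}^{-}$ are jointly Gaussian: the state obeys a linear recursion driven by the Gaussian noises $\mathbf{w}_{k}, v_{i,k}$ from a Gaussian initial condition $\mathbf{x}_{0}$, and $\hat{\mathbf{x}}_{k-1}^{-}$ is an affine function of the measurements through the Kalman recursion; hence $z_{i,k}$, being an affine function of jointly Gaussian variables, is Gaussian. Zero mean then follows because $\mathbb{E}[\mathbf{x}_{k}] = \mathbf{A}^{k}\mathbb{E}[\mathbf{x}_{0}] = \mathbf{0}$, the predictor is unbiased so $\mathbb{E}[\hat{\mathbf{x}}_{k-1}^{-}] = \mathbb{E}[\mathbf{x}_{k}]$, and $\mathbb{E}[v_{i,k}]=0$, giving $\mathbb{E}[z_{i,k}] = \mathbf{C}_{i}\mathbb{E}[\mathbf{x}_{k}] - \mathbf{C}_{i}\mathbb{E}[\hat{\mathbf{x}}_{k-1}^{-}] = \mathbf{0}$. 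For claim~2 I would note that $v_{i,k}$ is independent of $\mathbf{x}_{0}$ and of $\{\mathbf{w}_{j}\}_{j}$, hence of $\mathbf{x}_{k}$, and is independent of the past measurements that determine $\hat{\mathbf{x}}_{k-1}^{-}$, so $v_{i,k}$ is uncorrelated with $\tilde{\mathbf{x}}_{k}^{-}$; forming the outer product of the identity above with itself and taking expectations then annihilates the cross terms and yields $\Sigma_{z_{i}} = \mathbf{C}_{i}\bar{\mathbf{P}}_{k}\mathbf{C}_{i}^{\T} + R_{i}$.

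For claim~3 --- the whiteness of the innovation sequence, which I expect to be the main obstacle --- I would invoke the defining orthogonality property of the MMSE predictor: the prediction error $\tilde{\mathbf{x}}_{k}^{-}$ is uncorrelated with every measurement gathered up to slot $k-1$, and in particular with $z_{i,h}$ for any $h<k$, since $z_{i,h}$ is a linear function of those past measurements. Together with the independence of $v_{i,k}$ from everything up to slot $k-1$, this gives $\mathbb{E}[z_{i,k}z_{i,h}^{\T}] = \mathbf{C}_{i}\mathbb{E}[\tilde{\mathbf{x}}_{k}^{-}z_{i,h}^{\T}] + \mathbb{E}[v_{i,k}z_{i,h}^{\T}] = 0$, and joint Gaussianity from claim~1 upgrades this lack of correlation to genuine independence. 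The subtle point I would take care to spell out is precisely why the feedback signal $\hat{\mathbf{x}}_{k-1}^{-}$ deployed at the remote estimator coincides with the conditional-mean predictor, so that the orthogonal-projection argument is legitimate; here I would rely on the cited reference behind Lemma~\ref{lemma1} and on the standing stabilizability of $(\mathbf{A},\mathbf{Q}^{1/2})$ and observability of $(\mathbf{A},\mathbf{C})$, which ensure the Kalman recursion generating $\bar{\mathbf{P}}_{k}$ is well posed.
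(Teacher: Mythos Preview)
Your argument is correct and is essentially the classical proof of the innovation properties from Anderson and Moore. Note, however, that the paper does not supply its own proof of Lemma~\ref{lemma1}: the lemma is stated with a citation to \cite{anderson1979optimal} and used without further justification, so there is no in-paper argument to compare against. What you have written is precisely the standard derivation one would extract from that reference, so in effect you have filled in the proof the paper chose to omit.
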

\begin{remark}
In traditional local Kalman filters, they employ consumer electronic devices to compute the estimated state and transmit it. In this paper, the consumer electronic devices only compute the innovative $z_{i,k}$ for transmission. Four advantages can be obtained: (i) the consumer electronic device only needs to do simple calculations, effectively reducing the calculation cost of the consumer electronic device. (ii) transmission innovation reduces the consumption of network bandwidth. (iii) innovation is more confidential and reduces the risk of information leakage. (iv) {According to Lemma \ref{lemma1}, the innovation has a known distribution, which makes it easy to detect data tampering}.
\end{remark}

For each communication round, the remote estimator combines multiple received innovations $z_{i,k}, {i\in \{1,2,\cdots,n\}}$, and uses them to estimate the state $\mathbf{x}_{k}$ of the system. In particular, we define the estimation $\hat{\mathbf{x}}_{k} \triangleq \mathbb{E}\{\mathbf{x}_{k}|\mathbf{y}(1:k)\}$ and the corresponding estimation error covariance $\mathbf{P}_{k}\triangleq\mathbb{E}\{(\mathbf{x}_{k}-\hat{\mathbf{x}}_{k})(\mathbf{x}_{k}-\hat{\mathbf{x}}_{k})^T|\mathbf{y}(1:k)\}$. Then the Minimum Mean Squared Error (MMSE) estimations are given as follows:
\begin{subequations}\label{problem-1}
\begin{equation}
\hat{\mathbf{x}}_{k}^{-}=\mathbf{A}\hat{\mathbf{x}}_{k-1},
\end{equation}
\begin{equation}\label{problem-211}
\mathbf{P}_{k}^{-}=\mathbf{A}\mathbf{P}_{k-1}\mathbf{A}^{T}+\mathbf{Q},
\end{equation}
\begin{equation}\label{problem232-1}
\mathbf{K}_{k}=\mathbf{P}_{k}^{-}\mathbf{C}^{T}(\mathbf{C}\mathbf{P}_{k}^{-}\mathbf{C}^{T}+\mathbf{R})^{-1},
\end{equation}
\begin{equation}\label{problem23rt-1}
\hat{\mathbf{x}}_{k}=\hat{\mathbf{x}}_{k}^{-}+\mathbf{K}_{k}(\mathbf{y}_{k}-\mathbf{C}\hat{\mathbf{x}}_{k}^{-}),
\end{equation}
\begin{equation}\label{problemhjy-12}
\mathbf{P}_{k}=(\mathbf{I}-\mathbf{K}_{k}\mathbf{C})\mathbf{P}_{k}^{-},
\end{equation}
\end{subequations}
where $\hat{\mathbf{x}}_{k}^{-}$ and $\hat{\mathbf{x}}_{k}$ are the predicted and updated MMSE estimations of the state $\mathbf{x}_{k}$. $\mathbf{P}_{k}^{-}$ and $\mathbf{P}_{k}$ are covariance matrices of the estimation errors. $\mathbf{K}_{k} \in \mathbb{R}^{M\times n} $ is the Kalman filter gain of the system and $\mathbf{R}=\diag(R_{1},R_{2},...,R_{n})\in \mathbb{R}^{n\times n},\mathbf{y}_{k}=[y_{1,k},y_{2,k},...,y_{n,k}]^{T}$. If the Kalman filter is updated, the remote estimator will provide feedback on the reception of innovations and state estimates $\hat{\mathbf{x}}_{k-1}^{-}$  to the consumer electronic devices. Note that $\mathbf{P}_{k}$ exhibits exponential convergence from any initial condition\cite{anderson1979optimal}. Without loss of generality, we assume that the convergence value is the unique solution $\bar{\mathbf{P}}\succeq \mathbf{0}$ of {$\mathbf{X}=h\circ \tilde{g}(\mathbf{X})$}, where
\begin{equation}\label{equ5}
h(\mathbf{X})\triangleq \mathbf{AXA}^T +\mathbf{Q},
\end{equation}
\begin{equation}
\tilde{g}(\mathbf{X}) \triangleq \mathbf{X}-\mathbf{XC}^T (\mathbf{CXC}^T +\mathbf{R})^{-1}\mathbf{CX}.
\end{equation}
Then, we obtain\cite{li2017detection}
\begin{equation}\label{ewqr1}
\lim_{k \to \infty} \mathbf{P}_{k}^{-}=\bar{\mathbf{P}}.
\end{equation}

For a steady Kalman filter, $\mathbf{P}_{0}=\bar{\mathbf{P}}$.

\subsection{DoS Attack Model}
  In the IoT, consumer electronic devices transmit data through channels to the remote state estimator for joint estimation. While passing through the channels, an attacker may launch a DoS attack to compromise the integrity of the information, thereby degrading system performance.
  We assume that consumer electronic device $i$ sends data using channel $i$, where each channel exists in two different states, i.e., \textbf{status 0} and \textbf{state 1}. For \textbf{state 0}, it doesn't have to pay extra cost, but it has no resistance to DoS attacks. On the other hand, the channel with \textbf{state 1} is effective against a DoS attack but incurs an extra cost $c_{i}$.

  Similarly, it is assumed that the DoS attacker has the capability to attack multiple channels and can decide whether to attack channel $i$ or not, $i \in \{1,\ldots,n\}$. If the DoS attacker attacks the channel $i$, there is an additional fixed cost $c_{i}^{d}$; otherwise, there is no cost.

  Define $\bm{\alpha}_{k}=(\alpha_{k}^{1},\ldots,\alpha_{k}^{n})^{T}$ as the action of consumer electronic devices at $k$-th time slot, where $\alpha_{k}^{i},i \in \{1,\ldots,n\}$ is the action of consumer electronic device $i$. When the consumer electronic device $i$ selects channel $i$ in \textbf{state 1} for data transmission, we {denote} $\alpha_{k}^{i}=1$. When consumer electronic device $i$ selects channel $i$ in \textbf{state 0} for data transmission, we {denote} $\alpha_{k}^{i}=0$ . Similarly, define $\bm{\beta}_{k}=(\beta_{k}^{1},\ldots,\beta_{k}^{n})^{T}$, where $\beta_{k}^{i},i \in \{1,\ldots,n\}$ represents the DoS attacker's decision variable for the channel $i$. $\beta_{k}^{i}=1$ indicate that the DoS attacker launches an attack on channel $i$ at time $k$. Otherwise, $\beta_{k}^{i}=0$. For convenience, it is assumed that only when consumer electronic device $i$ selects channel $i$ in \textbf{state 0} to transmit data, the DoS attacker launches attacks on the channel $i$, the transmitted data will lose packet, i.e., $\alpha_{k}^{i}=0$ and $\beta_{k}^{i}=1$. In other cases, the packet is not lost and always arrives at remote state estimation successfully. Therefore, we {denote} the arrival indicator $\gamma_{i,k}$ of the packet $i$ as follows:
\begin{equation}
\gamma_{i,k}=\left\{
\begin{aligned}
0 & , & (\alpha_{k}^{i},\beta_{k}^{i})=(0,1), \\
1 & , &\text{otherwise}.
\end{aligned}
\right.
\end{equation}

We define $\hat{z}_{i,k}$ as the innovation received by the remote estimator over the network. When packet $i$ can be successfully received by remote estimator, we define $\hat{z}_{i, k} = z_{i, k} $, otherwise we have $\hat{z}_{i, k}=0$. Then, the arrival of innovation $z_{i,k}$ at the remote estimator is defined as
\begin{equation}
\hat{z}_{i,k}=\left\{
\begin{aligned}
z_{i,k} & , & \gamma_{i,k}=1, \\
0 & , &\gamma_{i,k}=0.
\end{aligned}
\right.
\end{equation}

For the sake of description, we define $\bm{\gamma}_{k}=\diag\left(\gamma_{1,k},\ldots,\gamma_{n,k}\right) $.
{If the RSE system is facing DoS attacks, according to Lemma \ref{lemma1},} the Kalman filter updates the system variable $\hat{\mathbf{x}}_{k}$ and the estimation error covariance $\mathbf{P}_{k}$ by
\begin{subequations}\label{pro1}
\begin{equation}
\tilde{\mathbf{K}}_{k}=\mathbf{P}_{k}^{-}\tilde{\mathbf{C}}^{T}(\tilde{\mathbf{C}}\mathbf{P}_{k}^{-}\tilde{\mathbf{C}}^{T}+\tilde{\mathbf{R}})^{\dag},
\end{equation}
\begin{equation}
\hat{\mathbf{x}}_{k}=\hat{\mathbf{x}}_{k}^{-}+{\mathbf{\tilde{K}}}_{k}\hat{\mathbf{z}}_{k},
\end{equation}
\begin{equation}
\mathbf{P}_{k}=(\mathbf{I}-\tilde{\mathbf{K}}_{k}\tilde{\mathbf{C}})\mathbf{P}_{k}^{-},
\end{equation}
\end{subequations}
where {$\tilde{\mathbf{C}}=\bm{\gamma}_{k}\mathbf{C}$, and $\tilde{\mathbf{R}}=\bm{\gamma}_{k}\mathbf{R}\bm{\gamma}_{k}^{T}$. The received innovation $\hat{\mathbf{z}}_{k}$ is defined as $\left[\hat{z}_{1,k},\hat{z}_{2,k},\ldots,\hat{z}_{n,k}\right]^T$.} Define $F(\mathbf{X},\bm{\gamma})\triangleq(\mathbf{I}-\tilde{\mathbf{K}}_{k}\bm{\gamma}_{k}\mathbf{C})h(\mathbf{X})\triangleq(\mathbf{I}-\tilde{\mathbf{K}}_{k}\tilde{\mathbf{C}})h(\mathbf{X})$.
Correspondingly, the covariance $\mathbf{P}_{k}$ of estimation error is defined as
\begin{equation}\label{qrer}
\mathbf{P}_{k}=F(\mathbf{P}_{k-1},\bm{\gamma}_{k}).
\end{equation}
\begin{lemma}\label{mylemma}
In a steady Kalman filter, the  covariance $\mathbf{P}_{k}$ of estimation error in \eqref{qrer} is given by
\begin{align}
\bar{\mathbf{P}}=F(\bar{\mathbf{P}},\mathbf{I}).
\end{align}

\end{lemma}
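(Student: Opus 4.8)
The plan is to specialize the attacked error-covariance map $F(\cdot,\bm{\gamma})$ to the ``no attack'' configuration $\bm{\gamma}_{k}=\mathbf{I}$, to show that it then collapses to the ordinary loss-free Kalman recursion, and finally to invoke the fixed-point characterization of $\bar{\mathbf{P}}$ recorded in \eqref{equ5}--\eqref{ewqr1}. In essence, Lemma~\ref{mylemma} is a consistency check that the DoS model degenerates correctly to the nominal steady filter when no packet is dropped.

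First I would substitute $\bm{\gamma}_{k}=\mathbf{I}$ into the quantities defined just before \eqref{qrer}. Since $\bm{\gamma}_{k}=\diag(1,\dots,1)$, this gives $\tilde{\mathbf{C}}=\bm{\gamma}_{k}\mathbf{C}=\mathbf{C}$ and $\tilde{\mathbf{R}}=\bm{\gamma}_{k}\mathbf{R}\bm{\gamma}_{k}^{T}=\mathbf{R}$. Because $\mathbf{R}=\diag(R_{1},\dots,R_{n})\succ\mathbf{0}$ and $\mathbf{P}_{k}^{-}\succeq\mathbf{0}$, the matrix $\mathbf{C}\mathbf{P}_{k}^{-}\mathbf{C}^{T}+\mathbf{R}$ is positive definite and hence invertible, so the Moore--Penrose pseudoinverse appearing in \eqref{pro1} coincides with the ordinary inverse; the attacked gain therefore reduces to $\tilde{\mathbf{K}}_{k}=\mathbf{P}_{k}^{-}\mathbf{C}^{T}(\mathbf{C}\mathbf{P}_{k}^{-}\mathbf{C}^{T}+\mathbf{R})^{-1}=\mathbf{K}_{k}$, the nominal Kalman gain of \eqref{problem232-1}. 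Substituting $\mathbf{P}_{k}^{-}=h(\mathbf{X})$ and expanding $F(\mathbf{X},\mathbf{I})=(\mathbf{I}-\tilde{\mathbf{K}}_{k}\tilde{\mathbf{C}})h(\mathbf{X})$ then yields $F(\mathbf{X},\mathbf{I})=h(\mathbf{X})-h(\mathbf{X})\mathbf{C}^{T}(\mathbf{C}h(\mathbf{X})\mathbf{C}^{T}+\mathbf{R})^{-1}\mathbf{C}h(\mathbf{X})=\tilde{g}(h(\mathbf{X}))=\tilde{g}\circ h(\mathbf{X})$, so $F(\cdot,\mathbf{I})$ is exactly the one-step error-covariance recursion of the unattacked Kalman filter.

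It then remains to identify $\bar{\mathbf{P}}$ with the stationary covariance of that recursion. I would use that, by the standing assumption, $\bar{\mathbf{P}}$ is the unique solution of $\mathbf{X}=h\circ\tilde{g}(\mathbf{X})$ and, by \eqref{ewqr1} together with \cite{anderson1979optimal,li2017detection}, the error covariance of the loss-free filter converges to it from any initialization; for a \emph{steady} Kalman filter one is already at that stationary covariance, $\mathbf{P}_{0}=\bar{\mathbf{P}}$, so one further loss-free update must return the same covariance, that is, $\bar{\mathbf{P}}=F(\bar{\mathbf{P}},\mathbf{I})$. Equivalently, applying $\tilde{g}$ to both sides of $\bar{\mathbf{P}}=h\circ\tilde{g}(\bar{\mathbf{P}})$ exhibits the stationary covariance shared by $h\circ\tilde{g}$ and $\tilde{g}\circ h$, and uniqueness of the fixed point finishes the argument.

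The step I expect to require the most care is precisely this last bit of bookkeeping: the order of composition ($h\circ\tilde{g}$ versus $\tilde{g}\circ h$), equivalently the prior-versus-posterior covariance convention, must be tracked so that the $\bar{\mathbf{P}}$ in the statement is exactly the covariance propagated by \eqref{qrer} under $\bm{\gamma}_{k}=\mathbf{I}$ --- which is what the ``steady Kalman filter, $\mathbf{P}_{0}=\bar{\mathbf{P}}$'' convention is there to guarantee. Apart from that, the only genuine analytic input is $\mathbf{R}\succ\mathbf{0}$, used to replace the pseudoinverse by an inverse; everything else is direct algebraic substitution.
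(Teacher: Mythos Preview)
Your proposal is correct and reaches the same endpoint as the paper --- both arguments reduce to the identification $F(\cdot,\mathbf{I})=\tilde g\circ h$ together with the fact that $\bar{\mathbf{P}}$ is a fixed point of this composition --- but the execution differs. You obtain $F(\cdot,\mathbf{I})=\tilde g\circ h$ by direct algebraic substitution (including the observation that $\mathbf{R}\succ\mathbf{0}$ turns the pseudoinverse into an ordinary inverse, a point the paper does not address), and then close by appealing to the steady-state convention $\mathbf{P}_0=\bar{\mathbf{P}}$ so that one further loss-free step returns $\bar{\mathbf{P}}$. The paper instead works through the limit relation $\lim_{k\to\infty}\mathbf{P}_k^{-}=\bar{\mathbf{P}}$ and the continuity of $h$ and $\tilde g$ to conclude $\lim_{k\to\infty}\tilde g(h(\mathbf{P}_{k-1}))=\bar{\mathbf{P}}$, and only states $F(\cdot)=\tilde g(h(\cdot))$ at the very end. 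Your route is more elementary and, by handling the pseudoinverse explicitly and flagging the $h\circ\tilde g$ versus $\tilde g\circ h$ bookkeeping, arguably cleaner; the paper's limit argument is the natural choice if one prefers to avoid assuming the filter is already initialized at steady state.
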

\begin{proof}
Note that the Kalman filters have attained a steady state, i.e. {\eqref{ewqr1} is valid}.

Furthermore, $\bar{\mathbf{P}} \succeq \mathbf{0}$ is a unique solution that is positive semi-definite of $\mathbf{X}=h\circ \tilde{g}(\mathbf{X})$, we have
\begin{align}
&h(\tilde{g}(\bar{\mathbf{P}}))=\bar{\mathbf{P}}\Leftrightarrow h(\tilde{g}(\lim_{k \to \infty} \mathbf{P}_{k}^{-}))=\bar{\mathbf{P}}.
\end{align}

According to the continuity of the function $\tilde{g}({\cdot})$, it is natural to obtain that:
\begin{equation}\label{2ewr}
 h(\lim_{k \to \infty}\tilde{g}(\mathbf{P}_{k}^{-}))=h(\tilde{g}(\lim_{k \to \infty} \mathbf{P}_{k}^{-}))=\bar{\mathbf{P}}
\end{equation}

According to {\eqref{problem-211}, \eqref{equ5} and \eqref{ewqr1}}, we get
\begin{equation}\label{cgfae}
\lim_{k \to \infty}h(\mathbf{P}_{k-1})=\bar{\mathbf{P}}.
\end{equation}

Because of the continuity of $h(\cdot)$, the following result is obtained:
\begin{equation}\label{1ewr}
h(\lim_{k \to \infty}\mathbf{P}_{k-1})=\bar{\mathbf{P}}.
\end{equation}

{Combining \eqref{ewqr1}}, \eqref{2ewr} and \eqref{1ewr}, it is easy to get
\begin{equation}
\tilde{g}(\bar{\mathbf{P}})=\tilde{g}(\lim_{k \to \infty} \mathbf{P}_{k}^{-})=\lim_{k \to \infty}\mathbf{P}_{k-1}=\bar{\mathbf{P}},
\end{equation}

According to \eqref{cgfae}, we obtain
\begin{equation}
\lim_{k \to \infty} \tilde{g}(h(\mathbf{P}_{k-1}))=\tilde{g}(\lim_{k \to \infty}h(\mathbf{P}_{k-1}))=\tilde{g}(\bar{\mathbf{P}})=\bar{\mathbf{P}}.
\end{equation}
When the system is steady, $\bm{\gamma}_{k}=\mathbf{I}$, then we have $F(\cdot)=\tilde{g}(h(\cdot))$.
\end{proof}

Therefore, when the system is stable and not under DoS attack, the state transition function can ensure that the error covariance is still $\bar{\mathbf{P}}$. Define $\lambda_{k} \triangleq k-l$, where $l$ indicates the time for the first packet loss from the stable state. The relationship between $\lambda_{k}$ and $\mathbf{P}_{k}$ at time $k$ is $\mathbf{P}_{k}=F^{\lambda_{k}}(\bar{\mathbf{P}},\bm{\gamma}_{k})$. Because of the fast convergence of the estimation error covariance, the error covariance will converge to $\bar{\mathbf{P}}$ when there is no packet loss within a certain time.

\section{NE Framework for Distributed IoT Security}\label{game theory}
In this section, the consumer electronic devices and the attacker's decision-making processes in distributed IoT are modeled as an infinite time-horizon game. Then the game will be analyzed based on the NE.
\subsection{Game in Infinite Time Horizon} \label{joij}
    In the game, the players consist of consumer electronic devices and a DoS attacker. We consider that the consumer electronic device computation is continuously generated and the DoS attacker's attack is sustainable over an infinite time range\cite{9174773}.

Game theory is based on two basic assumptions. First, the behavior of both sides of the game is rational, i.e., the decisions of both sides are based on their own interests and the goal is to maximize the interests. Second, each side knows the other's rationality and thus makes the optimal choice among all possible actions. Furthermore, game theory is based on the common knowledge, i.e., each player knows what is agreed in an infinite recursive sense. Therefore, the game can be expressed in terms of six tuples $\mathcal{G}=<L,S,A,\delta,R,\rho>$, i.e.,

\textbf{\textit{Player}}: $L=\{1,\ldots,n,n+1\}$ is players space, where $i\in\{1,\ldots,n\}$ stands for the consumer electronic device $i$ and $i=n+1$ represents the DoS attacker.

{\textbf{\textit{State}}}: $S=\{s_{1},\ldots,s_{k},\ldots\} $ is the set of state space, where $s_{k} \in S$ is the state in the game at time $k$. Let $\mathbf{P}_{k}$ be the state of the game.

\textbf{\textit{Action}}: $A=A_{1} \times,\ldots,\times A_{n} \times A_{n+1}$, where $A_{i},i\in \{1,\ldots,n\}$ is the action space for consumer electronic device $i$ and $A_{n+1}$ is the action space for the DoS attacker. {Consumer} electronic device $i$ selects channel $i$ in \textbf{state 1} or \textbf{state 0}  to transmit its innovation $z_{i,k}$. Similarly, the DoS attacker has a choice to either launch attacks on specific channels or giving up attacking. $\alpha_{k}^{i} \in A_{i}$ is the action of consumer electronic device $i$ at time slot $k$ and the action taken by attacker on the channel $i$ at time $k$ is denoted as $\beta_{k}^{i} \in A_{n+1}$.

\textbf{\textit{State transition $\delta$}}: $S\times A\rightarrow S$, the current state $s_{k}$ and the actions of the consumer electronic devices and the DoS attacker determine the next state $s_{k+1}$ i.e., $s_{k+1}=\delta(s_{k},\bm{\alpha}_{k},\bm{\beta}_{k})$. According to \eqref{qrer}, $\delta=F(\cdot)$.

\textbf{\textit{Reward}}: $R=\{r_{1},\ldots,r_{k},\ldots\}$, where $r_{k}: S \times A \rightarrow \mathbb{R}$ is the common reward function set for consumer electronic devices and the DoS attacker. The payoff at time $k$ for the players can be defined as
\begin{equation}\label{ad1f}
r(\mathbf{P}_{k},\bm{\alpha}_{k},\bm{\beta}_{k})=Tr(\mathbf{P}_{k})+\sum_{i=1}^{n}(c_{i}\alpha_{k}^{i}-c_{\beta}^{i}\beta_{k}^{i}),
\end{equation}
where $Tr(\mathbf{P}_{k})$ is the trace of $\mathbf{P}_{k}$. {The consumer electronic devices aim to minimize the estimation error covariance at the lowest cost, while the attacker conversely intends to maximize the estimation error covariance at the lowest cost.}

\textbf{\textit{Discount factor}}: $\rho \in (0,1)$ is a discounted factor, which  effectively speeds up convergence in the decision-making process by placing more emphasis on immediate payoffs rather than future payoffs.

For the consumer electronic devices and the DoS attacker, given a joint policy $(\pi^{1},\pi^{2})$, the value of state $s_{k}$ is calculated as the discounted cumulative rewards i.e.,
\begin{equation}
v(s_{k},\pi^{1},\pi^{2})=\sum_{k=0}^{\infty}\rho^{k}r(s_{k},\pi^{1}(s_{k}),\pi^{2}(s_{k})),
\end{equation}
where $s_{0}=s$ and $s_{k+1}=\delta(s_{k},\pi^{1}(s_{k}),\pi^{2}(s_{k}))$. Given joint policy $(\pi^{1},\pi^{2})$, the Q-value of the state-action pair $(s,\bm{\alpha},\bm{\beta})$ is denoted by
\begin{equation}
Q(s,\bm{\alpha},\bm{\beta},\pi^{1},\pi^{2})=r(s,\bm{\alpha},\bm{\beta})+v(s,\pi^{1},\pi^{2}),
\end{equation}
where $s_{1}=\delta(s,\bm{\alpha},\bm{\beta})$ and $s_{k+1}=\delta(s_{k},\pi^{1}(s_{k}),\pi^{2}(s_{k}))$.

It is important to point out whether consumer electronic devices and the DoS attacker know each other's behavior will result in whether the game information is symmetric. Then the game for consumer electronic devices and the DoS attacker can be divided into two cases: open-loop and close-loop.
\begin{enumerate}
 \item Open-loop case: the consumer electronic devices and the DoS attacker do not know the behaviors of each other. In particular, the DoS attacker is able to gather the feedback from remote estimator to consumer electronic devices. It forms a complete information static game. As a result, both sides of the game can possess the error covariance matrix $\mathbf{P}_{k}$ of the remote estimator, as well as the corresponding payoff value $r(\mathbf{P}_{k},{\bm{\alpha}_{k},\bm{\beta}_{k}})$ given $\mathbf{P}_{k},{\bm{\alpha}_{k},\bm{\beta}_{k}}$ at time $k$\cite{9174773}.
 \item Close-loop case: the consumer electronic devices and the DoS attacker observe each other's behavior, but the attacker lacks knowledge of the feedback from the remote estimator to the local consumer electronic devices, creating an information asymmetry game.
\end{enumerate}

\subsection{Nash Equilibrium}
The NE is usually the solution to a game problem\cite{maskin1983theory}. In non-cooperative game with two or more players, each player will adjust their action to find a favorable strategy. When NE is reached, an individual can receive no incremental benefit from changing actions, assuming that other players remain constant in their strategies.

In the infinite time-horizon game, both sides of the game strive to search for NE. It is assumed that the NE $(\pi_{*}^{1},\pi_{*}^{2})$ between consumer electronic devices and the DoS attacker can be found, where $\pi_{*}^{1}=(\pi_{1}^{*},\ldots,\pi_{n}^{*}),i\in\{1,\ldots,n\}$. {$\pi_{i}^{*}$ and $\pi_{*}^{2}$ are the strategies of consumer electronic device $i$ and DoS attacker when NE is reached, respectively.} Especially, the NE satisfies the Bellman equation, defined as
\begin{align}
&Q(s,\bm{\alpha},\bm{\beta},\pi_{*}^{1},\pi_{*}^{2})=r(s,\bm{\alpha},\bm{\beta}) \notag \\
&+\rho \max_{\bm{\beta'} \in A_{n+1}}\min_{\bm{\alpha'} \in A_{1}\times,\ldots,\times A_{n}}Q(s',\bm{\alpha'},\bm{\beta'},\pi_{*}^{1},\pi_{*}^{2}),
\end{align}
where $s'=\delta(s,\bm{\alpha},\bm{\beta})$. For all $s \in S$, we have
\begin{equation}
v(s,\pi^{1},\pi_{*}^{2})\leq v(s,\pi_{*}^{1},\pi_{*}^{2})\leq v(s,\pi_{*}^{1},\pi^{2}),
\end{equation}
\begin{equation}
Q(s,\bm{\alpha},\bm{\beta},\pi^{1},\pi_{*}^{2})\leq Q(s,\bm{\alpha},\bm{\beta},\pi_{*}^{1},\pi_{*}^{2})\leq Q(s,\bm{\alpha},\bm{\beta},\pi_{*}^{1},\pi^{2}).
\end{equation}

For both open-loop and close-loop cases, the goal of consumer electronic devices and the attacker is to find a NE.

\section{Open-Loop Security Strategies}\label{open loop}
In this section, we first discuss in the open loop structure of information symmetry. The reinforcement learning algorithm is used to solve the problem in infinite horizon games. In particular, considering the limitations of the existing centralized Q-learning, this paper first proposes the centralized Minimax-DQN approach to find {NE} of the game. Furthermore, in order to reduce the action space and accelerate to find the NE, a distributed Minimax-DQN is proposed.

Under the framework of reinforcement learning, we build an MDP to simulate the interaction between consumer electronic devices and DoS attacker. The components of the reinforcement learning problem are similar to the description of the infinite time range game {in \ref{joij}}. By employing minibatch training, experience replay, and target networks, we can train the network parameters using game data to discover NE.

\subsection{Centralized Minimax-DQN for Infinite Time-Horizon Game}
In recent research, a method called Minimax-DQN has extended Littman's algorithm by incorporating function approximation, similar to DQN\cite{fan2020theoretical}.  To design the Q-network, the state is taken as the input and the Q-value of each action pair $(\bm{\alpha},\bm{\beta})$ is taken as the output. We define the evaluation network as $Q(S,\bm{\alpha},\bm{\beta}|\theta_{k})$, where $\theta_{k}$ is the adjustable parameter. The target network is defined as $Q(S,\bm{\alpha},\bm{\beta}|\theta_{k}^{-})$ and $\theta_{k}^{-}$ is a parameter of it. Given a state-action pair, the consumer electronic devices and the attacker share the same Q function $Q(s,\bm{\alpha},\bm{\beta}|\theta_{k})$ at time $k$ to estimate how well they learn. Define the value function under a NE $(\pi_{*}^{1},\pi_{*}^{2})$ as $Q_{*}(s,\bm{\alpha},\bm{\beta}|\theta^{-})$ satisfying the Bellman equation:
\begin{equation}
Q_{*}(s,\bm{\alpha},\bm{\beta}|\theta^{-})=r(s,\bm{\alpha},\bm{\beta})+\rho Q_{*}(s',\pi_{*}^{1},\pi_{*}^{2}|\theta^{-}),
\end{equation}
where $s'$ is the state generated by taking action $(\bm{\alpha},\bm{\beta})$ in state $s$.  The parameters $\theta_{k}^{-}$ of  DQN can be iteratively adjusted during training to minimize the mean-squared error in the Bellman equation\cite{hou2022deep}. Action $(\bm{\alpha},\bm{\beta})$ obtained by the minimax operation is determined as follow:
\begin{align}
\tilde{y}=r_{k}+\rho\max_{\bm{\beta}'}\min_{\bm{\alpha}'}Q(s_{k+1},\bm{\alpha}',\bm{\beta}'|\theta_{k}^{-}),\label{23adfw}\\
\mathcal{L}_{k}(\theta_{k})=E_{s,\bm{\alpha},\bm{\beta},r,s'}[(\tilde{y}-Q(s,\bm{\alpha},\bm{\beta}|\theta_{k}))^{2}],\label{anmk}
\end{align}
where $r_{k}=r(\mathbf{P}_{k},\bm{\alpha}_{k},\bm{\beta}_{k})$, $\tilde{y}$ is the target or TD-target. It represents the objective that we aim to achieve through updates to the parameters $\theta_{k}$. The weights of the target network are periodically updated every $c$ steps, by transferring the weights from the evaluate network, i.e., $\theta^{-}=\theta$\cite{leong2020deep}. $\mathcal{L}_{k}(\theta_{k})$ is the loss function. Then, we employ the Stochastic Gradient Descent (SGD) optimization algorithm to achieve the best network parameters. This allows us to optimize the loss function and acquire the weight parameters of the DQN, which can be expressed as
\begin{equation}\label{tdxj-1}
\theta_{k+1}=\theta_{k}-\eta\nabla_{\theta_{k}}\mathcal{L}_{k}(\theta_{k}),
\end{equation}
where $\eta$ {denotes} the learning rate and $\nabla_{\theta_{k}}\mathcal{L}_{k}(\theta_{k})$ is the gradient of the loss function with respect to the weights, i.e.,
\begin{equation}\label{tdxj-2}
\nabla_{\theta_{k}}\mathcal{L}_{k}(\theta_{k})=E_{s,\bm{\alpha},\bm{\beta},r,s'}[\tilde{y}-Q(s,\bm{\alpha},\bm{\beta}|\theta_{k})].
\end{equation}

\begin{figure}
\flushleft
\includegraphics[width=1\linewidth]{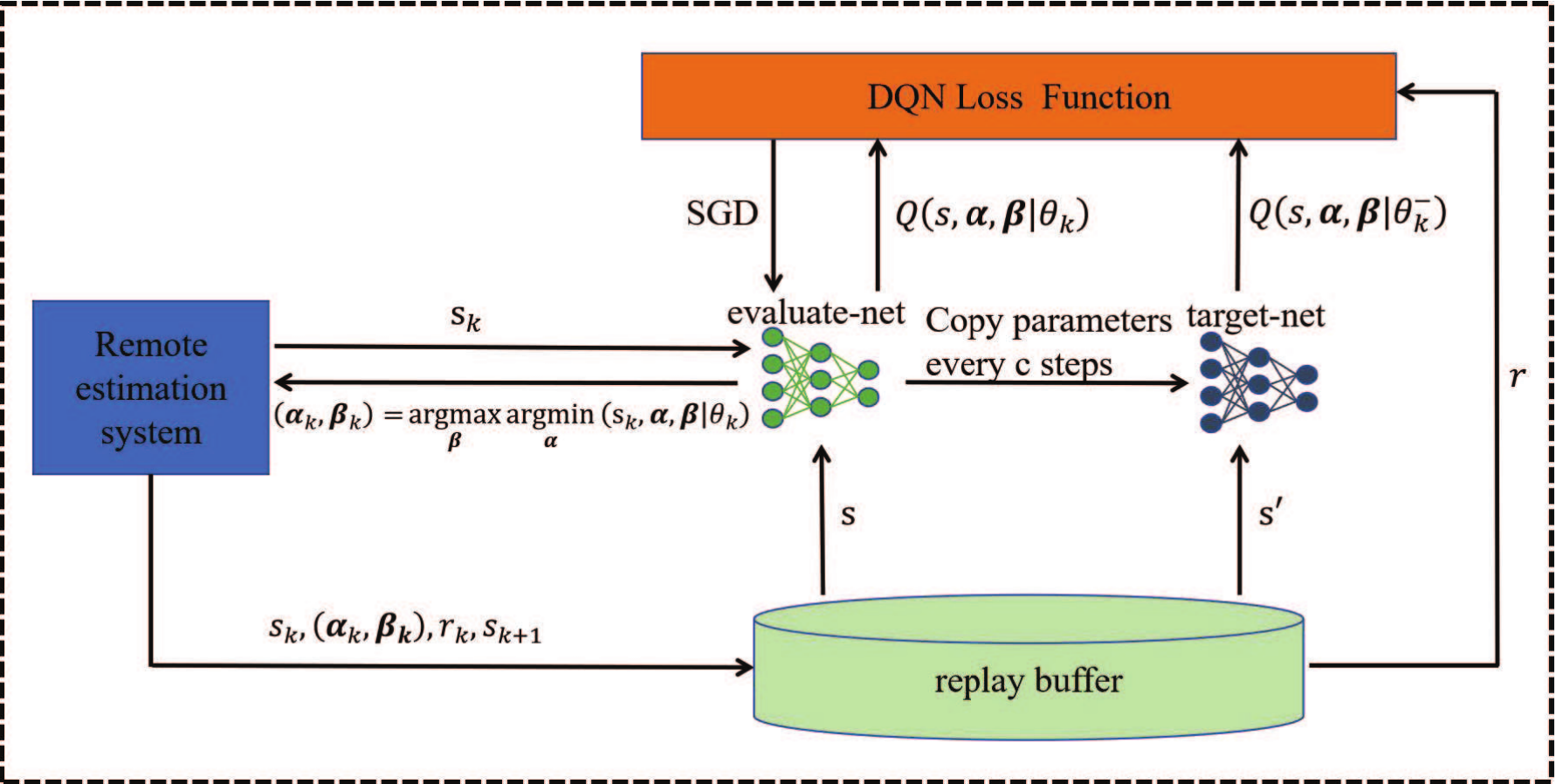}
\caption{Training process of centralized Minimax-DQN.} \label{training process}
\end{figure}
\begin{remark}
 In this scenario, both the consumer electronic devices and the DoS attacker make independent choices for their actions at each time step. As a result, there are $2^n$ strategies available to the consumer electronic devices for the scheduler, and the DoS attacker also has $2^n$ strategies. Consequently, the action space $A$ comprises $2^{2n}$ possible elements. Any $s\in S$ requires $2^{2n}$ space to learn the Q-function.
\end{remark}
\begin{remark}
The Minimax-DQN proposed in this paper, in addiction to using neural networks to approximate functions, empirical replay and target networks are also used to assist. Empirical replay can break the relationship between training data, so that the sample data can meet the independent hypothesis. Meanwhile, the sample data can be used many times, improving the utilization rate of data. The target network uses the same structure as Q-network to enhance the stability of neural network training. The training process of Centralized Minimax-DQN is shown in the Fig. \ref{training process} and the Centralized Minimax-DQN for search NE is shown in Algorithm~\ref{algorithm1}.
\end{remark}

\begin{algorithm}[tp]
\caption{{Centralized Minimax-DQN for NE}}
\label{algorithm1}
 \begin{algorithmic}[1]
 \State \textbf{Input:} Markov game $\mathcal{G}=<L,S,A,\delta,R,\rho>$, replay memory $\mathcal{D}$, minibatch size $N$, exploration probability $\epsilon \in (0,1)$.
 \For {episode = $1$ to $T$}
 \State Initialize replay memory $\mathcal{D}$ to capacity N, Q-network with parameters $\theta$, target-network with parameters $\theta^{-}=\theta$, observation $s_{1}=\bar{\mathbf{P}}$ and $k=1$.
 \While{$\|Q(s_{k},\bm{\alpha},\bm{\beta}|\theta_{k+1})-Q(s_{k},\bm{\alpha},\bm{\beta}|\theta_{k})\|>0$}
 \State With probability $\epsilon$ consumer electronic devices and the DoS attacker select a random action $(\bm{\alpha}_{k},\bm{\beta_{k}})$, otherwise $(\bm{\alpha}_{k},\bm{\beta}_{k}) =\mathop{\arg\max}\limits_{\bm{\beta}}\mathop{\arg\min}\limits_{\bm{\alpha}}Q(s_{k},\bm{\alpha},\bm{\beta}|\theta_{k})$.
 \State Execute actions $(\bm{\alpha}_{k},\bm{\beta}_{k})$, observe next state $s_{k+1}$ and reward $r_{k}$ according to \eqref{qrer}\eqref{ad1f}.
 \State Store transition $(s_{k},\bm{\alpha}_{k},\bm{\beta}_{k},s_{k+1})$ in $\mathcal{D}$.
 \State Sample random minibatch of transitions from $\mathcal{D}$.
 \State Compute $\tilde{y}$ according to \eqref{23adfw}.
 \State Form the loss according to \eqref{anmk}.
 \State Update $\theta$ using \eqref{tdxj-1}.
 \State Every $c$ steps update the target network $\theta_{k}^{-}=\theta_{k}$.
 \State $k\gets k+1$
 \EndWhile
 \State \textbf{EndWhile}
 \EndFor
 \State \textbf{EndFor}
 \end{algorithmic}
\end{algorithm}

\subsection{Distributed Minimax-DQN for Infinite Time-Horizon Game}
Unlike the centralized Minimax-DQN, {consumer electronic devices and attackers} no longer share the same network in distributed reinforcement learning, but train their own networks separately.

For the DoS attacker, we denote the Q-value of state $s\in S$ is $Q^{n+1}(s,\bm{\beta}|\theta_{k}^{a})$, where $\theta_{k}^{a}$ is the parameter for training the DoS attacker's evaluate network. $Q^{n+1}(s,\bm{\beta}|\theta_{k}^{a-})$ is the Q-value of the DoS attackers' target Q-network and $\theta_{k}^{a-}$ is the target-network's parameter. Updates to Q-network is described as follow:
\begin{equation}\label{qgdda}
\begin{aligned}
\tilde{y}^{a}=\left\{
\begin{aligned}
&Q^{n+1}(s,\bm{\beta},\theta_{k}^{a-}), \text{if} (s,\bm{\beta})\neq(s_{k},\bm{\beta}_{k}),\\
&r_{k}+\rho \max_{\bm{\beta}}Q^{n+1}(s_{k+1},\bm{\beta}|\theta_{k}^{a-}), \text{otherwise}.
\end{aligned}
\right.
\end{aligned}
\end{equation}
\begin{equation}\label{f2a1}
\mathcal{L}_{k}^{a}(\theta_{k}^{a})=E_{s,\bm{\beta},r,s'}[(\tilde{y}^{a}-Q^{n+1}(s,\bm{\beta}|\theta_{k}^{a}))^{2}],
\end{equation}
where $s_{k+1}$ is the estimation error covariance at the next time step. $\tilde{y}^{a}$ is the target or TD-target of the DoS attacker. $\mathcal{L}_{k}^{a}(\theta_{k}^{a})$ is the loss function of the DoS attacker's Q-network.

Similar to \eqref{tdxj-1}, \eqref{tdxj-2}, the update of $\theta_{k}^{a}$ can be expressed as
\begin{equation}\label{12aff}
\theta_{k+1}^{a}=\theta_{k}^{a}-\eta^{a}\nabla_{\theta_{k}^{a}}\mathcal{L}_{k}(\theta_{k}^{a}),
\end{equation}
where $\eta^{a}$ {denotes} the learning rate and $\nabla_{\theta_{k}^{a}}\mathcal{L}_{k}(\theta_{k}^{a})$ is the weight gradient of the loss function:
\begin{equation}
\nabla_{\theta_{k}^{a}}\mathcal{L}_{k}(\theta_{k}^{a})=E_{s,\bm{\beta},r,s'}[\tilde{y}^{a}-Q(s,\bm{\beta}|\theta_{k}^{a})].
\end{equation}

The policy update rules of the DoS attacker are as follows:
\begin{equation}\label{dafs1}
\pi_{k+1}^{{n+1}}(s)=\left\{
\begin{aligned}
&\pi_{k}^{n+1}(s),\text{if}\:s\neq s_{k}\:\text{or} \\ &\max_{\bm{\beta}}Q^{n+1}(s,\bm{\beta}|\theta_{k}^{a})=\max_{\bm{\beta}}Q^{n+1}(s,\bm{\beta}|\theta_{k+1}^{a})\\
&\bm{\beta}_{k}, \text{otherwise},
\end{aligned}
\right.
\end{equation}
where the initial strategy $\pi_{1}^{n+1}(s)$ is randomly selected from the {action} set of the DoS attacker.

In the system of this paper, all the consumer electronic devices work together to train a neural network. We denote the Q-value of {state $s \in S$ and action $\alpha$} of consumer electronic devices is $Q(s,\bm{\alpha}|\theta_{k}^{s})$, where $\theta_{k}^{s}$ is the parameter for training the consumer electronic devices' evaluate network. $Q(s,\bm{\alpha}|\theta_{k}^{s-})$ is the Q-value of the consumer electronic devices' target Q-network, where $\theta_{k}^{s-}$ is the target-network's parameter. The update of Q-network can be expressed as
\begin{equation}\label{ds12f}
\tilde{y}^{s}=\left\{
\begin{aligned}
&Q(s,\bm{\alpha},\theta_{k}^{s-}) ,\text{if}~ (s,\bm{\alpha})\neq(s_{k},\bm{\alpha}_{k}),\\
&r_{k}+\rho \min_{\bm{\alpha}}Q(s_{k+1},\bm{\alpha},\theta_{k}^{s-}) ,  \text{otherwise}.
\end{aligned}
\right.
\end{equation}
\begin{equation}\label{dspio}
\mathcal{L}_{k}^{s}(\theta_{k}^{s})=E_{s,\bm{\alpha},r,s'}[\tilde{y}^{s}-Q(s,\bm{\alpha}|\theta_{k}^{s})^{{2}}],
\end{equation}
 where $s_{k+1}$ is determined based on the current state $s_{k}$ and the action $\alpha$. $\tilde{y}^{s}$ is the target or TD-target of the consumer electronic devices, $\mathcal{L}_{k}^{s}(\theta_{k}^{s})$ is the loss function of the Q-network.
Similarly, we utilize SGD to optimize the loss function and obtain the weight parameters of the DQN, expressed as
\begin{equation}\label{dfa45}
\theta_{k+1}^{s}=\theta_{k}^{s}-\eta^{s}\nabla_{\theta_{k}^{s}}\mathcal{L}_{k}(\theta_{k}^{s}),
\end{equation}
where $\eta^{s}$ {denotes} the learning rate and $\nabla_{\theta_{k}^{s}}\mathcal{L}_{k}(\theta_{k}^{s})$ is the weight gradient of the loss function, i.e.,
\begin{equation}
\nabla_{\theta_{k}^{s}}\mathcal{L}_{k}(\theta_{k}^{s})=E_{s,\bm{\alpha},r,s'}[\tilde{y}^{s}-Q(s,\bm{\alpha}|\theta_{k}^{s})].
\end{equation}

The policy update rules of the consumer electronic devices are proposed as
\begin{equation}\label{dfa123}
\pi_{k+1}^{1}(s)=\left\{
\begin{aligned}
&\pi_{k}^{1}(s),\text{if}\:s\neq s_{k}\:\text{or}\\ &\min_{\bm{\alpha}}Q(s,\bm{{\alpha}}|\theta_{k}^{s})=\min_{\bm{\alpha}}Q^{n+1}(s,\bm{\alpha}|\theta_{k+1}^{s})\\
&\bm{\alpha}_{k},\text{otherwise},
\end{aligned}
\right.
\end{equation}
where $\pi_{1}^{1}(s)\in \{0,1\}$ is adopted randomly among all $s \in S$.
\begin{remark}
In the distributed Minimax-DQN, both consumer electronic devices and the DoS attacker train their own network, and they determine their actions based on them. The training process is shown in Algorithm~\ref{alg:algorithm2}. In this way, the model complexity can be reduced. The consumer electronic devices and the DoS attacker each have $2^{n}$ strategies, {thereby greatly reducing the size of the action space from $2^{2n}$ in centralized Minimax-DQN to $2*2^{n}$.} This will effectively increase the convergence rate of the model.
\end{remark}

\begin{algorithm}[tp]
\caption{{Distributed Minimax-DQN for NE}}
\label{alg:algorithm2}
 \begin{algorithmic}[1]
 \State \textbf{Input:} Markov game $\mathcal{G}=<L,S,A,\delta,R,\rho>$, replay memory $\mathcal{D}_{a}$ for attacker, $\mathcal{D}_{s}$ for consumer electronic devices, minibatch size $N$, exploration probability $\epsilon \in (0,1)$.
 \For {episode = $1$ to $T$}
 \State Initial $\mathcal{D}_{a}$,$\mathcal{D}_{s}$, Q-network with parameters $\theta^{a}$,$\theta^{s}$, target-network with parameters $\theta^{a-}=\theta^{a}$,$\theta^{s-}=\theta^{s}$ of the DoS attacker and consumer electronic devices respectively, observation $s_{1}=\bar{\mathbf{P}}$ and $k$=1.
 \While{{$\|Q(s_{k},\bm{\beta}|\theta^{a-}_{k+1})-Q(s_{k},\bm{\beta}|\theta^{a-}_{k})\|>0$ and $\|Q(s_{k},\bm{\alpha}|\theta^{s-}_{k+1})-Q(s_{k},\bm{\alpha}|\theta^{s-}_{k})\|>0$}}
 \State {The} DoS attacker select action $\bm{\beta}_{k}=\mathop{\arg\max}\limits_{\bm{\beta}}{Q^{n+1}(s_{k},\bm{\beta}|\theta_{k}^{a-})}$ with $\epsilon$-greedy and consumer electronic devices select the action $\bm{\alpha}_{k}=\mathop{\arg\min}\limits_{\bm{\alpha}}{Q(s_{k},\bm{\alpha}|\theta_{k}^{s-})}$  with $\epsilon$-greedy.
 \State Using \eqref{qgdda}{,}\eqref{f2a1}{,}\eqref{12aff} to train the DoS attacker's Q-network and update the policy $\pi_{k+1}^{{n+1}}$ of attacker by \eqref{dafs1}.
 \State Using \eqref{ds12f}{,}\eqref{dspio}{,}\eqref{dfa45} to train the consumer electronic devices' Q-network and update the policy $\pi_{k+1}^{1}$ of consumer electronic devices by \eqref{dfa123}.
 \EndWhile
 \State \textbf{EndWhile}
 \State Compute the NE for attacker $\pi_{*}^{2}$ and for consumer electronic devices $\pi_{*}^{1}$.
 \EndFor
 \State \textbf{EndFor}
 \end{algorithmic}
\end{algorithm}

\section{Closed-Loop Security Strategies}\label{close loop}
{In the open-loop case analyzed in the previous section}, we assume that consumer electronic devices and the DoS attacker cannot observe each other's actions, {but} the feedback from remote estimator to the consumer electronic {devices} is accessible to the DoS attacker. However, in the actual wireless network environment, the behavior of both parties can often be obtained through eavesdropping attacks. In this section, we focus on closed-loop situations where consumer electronic devices and the DoS attacker can observe each other's actions, but the DoS attacker does not have access to the feedback sent by the remote estimator to the local consumer electronic device. Therefore, the information on both sides becomes asymmetric. Moreover, both players in the game have the ability to infer or guess the other's current behavior based on each other's historical behavior. Obviously, these dynamic speculations enable attacker to make better choices in subsequent instances.

We denote $a_{i,k}\in\{a_{i}^{0},a_{i}^{1}\}$ as the power of the consumer electronic device $i$ for transmission, where $a_{i}^{0},a_{i}^{1}$ respectively represent the power that consumer electronic device $i$ selects the channel in \textbf{state 0} or \textbf{state 1} to transmit. In {addition}, we let {$b_{i,k}\in \{b_{i}^{0},b_{i}^{1}\}$} to represent the power consumed by a DoS attacker to attack the channel $i$. $b_{i,k}=0$ means that the DoS attacker does not attack the channel $i$ and {$b_{i}^{1}$ is the power needed to attack channel $i$}. We employ signal-interference-plus-noise-ratio (SINR) as a metric to measure the packet loss caused by DoS attacks:
\begin{equation}
\textmd{SINR}_{i,k}=\frac{a_{i,k}}{b_{i,k}+n_{0}},
\end{equation}
where $n_{0}$ is the additive white channel noise's power. Then the packet-error-rate (PER) can be used to measure the packet losses, i.e.,
\begin{equation}\label{dsf125}
\textmd{PER}_{i,k}=\hat{f}(\textmd{SINR}_{i,k}),
\end{equation}
where $\hat{f}(\cdot)$ is a non-increasing function that is determined by the characteristics and modulation schemes being used. The arrival of the packet sent by consumer electronic device $i$ can be {denoted} by $\gamma_{i,k}$. $\gamma_{i,k}=0$ indicates that the data packet is lost, otherwise $\gamma_{i,k}=1$. The probability of successfully receiving a packet is denoted as
\begin{equation}
Pr(\gamma_{i,k}=1)=t_{i,k}\triangleq 1-\textmd{PER}_{i,k}.
\end{equation}

In the asymmetric game, the consumer electronic devices planning problem resembles a MDP, while the DoS attacker planning problem resembles a POMDP. {It is common to form} beliefs as the state of the new MDP to solve the POMDP problems\cite{sondik1978optimal}.

Belief-based {game} can be expressed as a six-tuple $<L,\mathcal{B},\mathcal{A},T,\mathcal{R},{\rho}>$:

\textbf{\textit{Player}}: $L=\{1,\ldots,n,n+1\}$ where $i\in\{1,\ldots,n\}$ represents consumer electronic device $i$ and $n+1$ represent the DoS attacker.

\textbf{\textit{Belief State Space}}: $\tau_{i,k}$ is the redefined as state $s_{k}$ of consumer electronic device $i$ at time $k$. It is the interval between the current time $k$ and the time when the estimator recently successfully received the consumer electronic device packet. $\mathcal{B}=\Delta(S)$ represents the continuous belief state space based on $S$ with $S=\{s_{0},s_{1},\ldots\}$. We consider a finite set to simplify the problem, that is, $\tau_{i,k}\in\{0,1,\ldots,m\}$ for all $i\in\{1,\ldots,n\}$. At time $k$, the DoS attacker generates a predicted probability distribution $\mathbf{B}_{k}\in \mathbb{R}^{n\times (m+1)}$ on state space $S$, where row $i$ of $\mathbf{B}_{k}$ stands for the belief state distribution of consumer electronic device $i$ and $\mathbf{B}_{k}$ is a common knowledge among all players $L$.

\textbf{\textit{Action}}: $\mathcal{A}=\prod_{i\in L} \mathcal{A}_{i}$, where $\mathcal{A}_{i},i \in \{1,\ldots,n\}$ is the action set of consumer electronic device $i$, and $\mathcal{A}_{n+1}$ is the action set of the DoS attacker. At each time step $k$, $\alpha_{k}^{i}\in \{0,1\}$ represents the action of consumer electronic device $i$. $\alpha_{k}^{i}=0$ means that the channel in \textbf{state 0} is selected and $\alpha_{k}^{i}=1$ means that the channel in \textbf{state 1} is selected. Similarly, we denote $\beta_{k}^{i}\in\{0,1\}$ as the variable of the attacker's action on the $i$-th channel.

\textbf{\textit{Transition Probability}}: Let $B_{k}^{i,j}$ be the $ij$-th element in $\mathbf{B}_{k}$, and then the {transition function} is expressed as
\begin{align*} {\textbf{B}}_{k+1} =T_{k}\begin{pmatrix}\frac{t_{1,k}}{1-t_{1,k}} & B_{k}^{1,1} & \cdots & B_{k}^{1,m}+B_{k}^{1,m+1} \\ \vdots & \vdots & \cdots & \vdots \\ \frac{t_{n,k}}{1-t_{n,k}}& B_{k}^{n,1} & \cdots & B_{k}^{n,m}+B_{k}^{n,m+1} \end{pmatrix}, \end{align*}
where {$T_{k}=diag(1-t_{1,k},\ldots,1-t_{n,k})$}. In this finite set, we consider that after $m$ consecutive packet losses, the second packet loss is still counted as $m$ packet losses. It is easy to obtain that the belief state $\mathbf{B}$ satisfies Markov processes.

\textbf{\textit{Reward function}}: $\mathcal{R}=\{r_{1},\ldots,r_{k},\ldots\}$. The instantaneous reward function $r_{k}$ can be computed as
\begin{align}
r_{k}&=r(\mathbf{B}_{k},{\bm{\alpha_{k}},\bm{\beta_{k}}})\\\nonumber
&=\sum \limits _{i=1}^{n}\sum _{j=1}^{m+1}B_{k}^{i,j}(j-1)+c_{i}\alpha_{k}^{i}-c^{i}_{\beta}\beta^{i}_{k}.
\end{align}

{For consumer electronic devices, the expectation is minimum belief state at minimum cost, and DoS attacker expects minimum information state at minimum cost.}

{\textbf{\textit{Discount factor}}: $\rho \in (0,1)$.}
\begin{remark}
The {centralized and} distributed Minmax-DQN proposed in the open-loop case also can be apply to the closed loop case. It just need to change the state $s$ to the belief state $\mathbf{B}$ and modify the reward function $r(\cdot)$.  In the system of this paper, a target is measured by multiple consumer electronic devices and then evaluated jointly in the remote estimator. Therefore, the belief state $\mathbf{B}$ of the DoS attacker is a matrix, which is different from a vector in \cite{ding2018attacks}.
\end{remark}

\begin{table*}
\centering
\caption{Convergence of {$Q(s_{k},\alpha|\theta_{k}^{s})$ and $Q(s_{k},\beta|\theta_{k}^{a})$} in two states}
\label{tab:table2}
\resizebox{1.0\linewidth}{!}{
\begin{tabular}{ccccccccc}
\toprule
\multirow{2}{*}{State}                                             & \multicolumn{4}{c}{{$Q(s_{k},\alpha|\theta_{k}^{s})$}}                             & \multicolumn{4}{c}{{$Q(s_{k},\beta|\theta_{k}^{a})$}}                              \\
\cline{2-9}
                                                                   & $\bm{\alpha}=(0,0)$ & $\bm{\alpha}=(0,1)$ & $\bm{\alpha}=(1,0)$ & $\bm{\alpha}=(1,1)$ &  $\bm{\beta}=(0,0)$ &   $\bm{\beta}=(0,1)$ &   $\bm{\beta}=(1,0)$ &   $\bm{\beta}=(1,1)$   \\
\hline
$\bar{\mathbf{P}}$                                                 &\textbf{ 1.356 }     & 5.549      & 7.744      & 12.264     & \textbf{6.404}      & 0.763      & 1.566      & -3.878      \\
$F(\bar{\mathbf{P}}, \begin{bmatrix}1 & 0 \\ 0 & 0 \end{bmatrix})$ &\textbf{ 1.047  }    & 5.774      & 7.834      & 12.446     &\textbf{ 5.920  }    & 0.616      & 1.232      & -3.529      \\
\bottomrule
\end{tabular}
}
\end{table*}

\section{Simulation and Analysis}\label{simulation}
\vspace{1.0em}
This section first describes the experimental parameter settings of the system. Subsequently, we verify the proof presented earlier through experiments. Finally, we will do some experiments to prove the feasibility and superiority of our algorithm in open-loop case and close-loop case.
\subsection{System Parameter Setting}
We consider the system with two consumer electronic devices and a DoS attacker with parameters:\\
\begin{align*}
&A=\begin{bmatrix}2 & 1 \\ 0.7 & 0.8 \end{bmatrix},C=\begin{bmatrix}1 &0 \\ 0 & 2 \end{bmatrix},Q=\begin{bmatrix}0.6 &0 \\ 0 & 0.6\\\end{bmatrix},\\
&R_{1}=0.7,R_{2}=0.4,
\end{align*}
the initial system status is the steady-state error covariance $\bar{\mathbf{P}}=\begin{bmatrix}0.530 &0.020 \\ 0.020 & 0.088 \end{bmatrix}$.\\

{In order to verify Lemma 2 and demonstrate the error covariance's rapid convergence to $\bar{\mathbf{P}}$ without packet loss, we design an experiment comparing the remote estimation system's performance before and after DoS attacks. As shown in Fig. \ref{model_correct_proof}, the stability of the system is measured by the estimation error covariance which is always $\bar{\mathbf{P}}$ in the steady state without packet loss.} For the system in an unstable state, that is, packets are lost for five consecutive communication cycles starting from step=1, the {estimation error covariance} will start to increase. But when packet loss is stopped from step=6, the {estimation error covariance} can quickly converge to $\bar{\mathbf{P}}$, so that the system is in a stable state.
\begin{figure}
\flushleft
\includegraphics[width=1\linewidth]{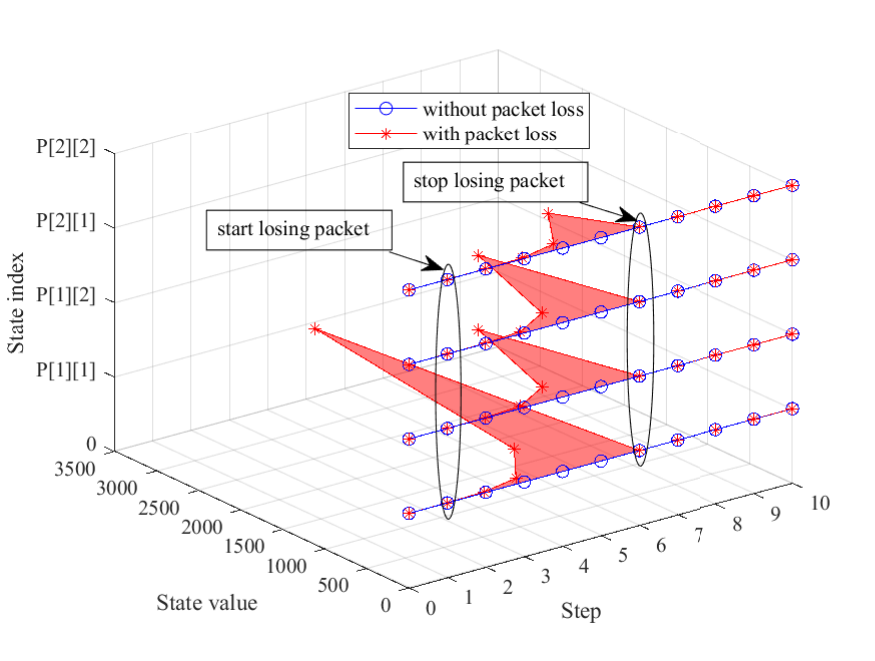}
\caption{The correctness verification of the designed model.} \label{model_correct_proof}
\end{figure}

\subsection{Open-loop Case}
\begin{figure*}[tp]
\centering
\includegraphics[width=1\linewidth]{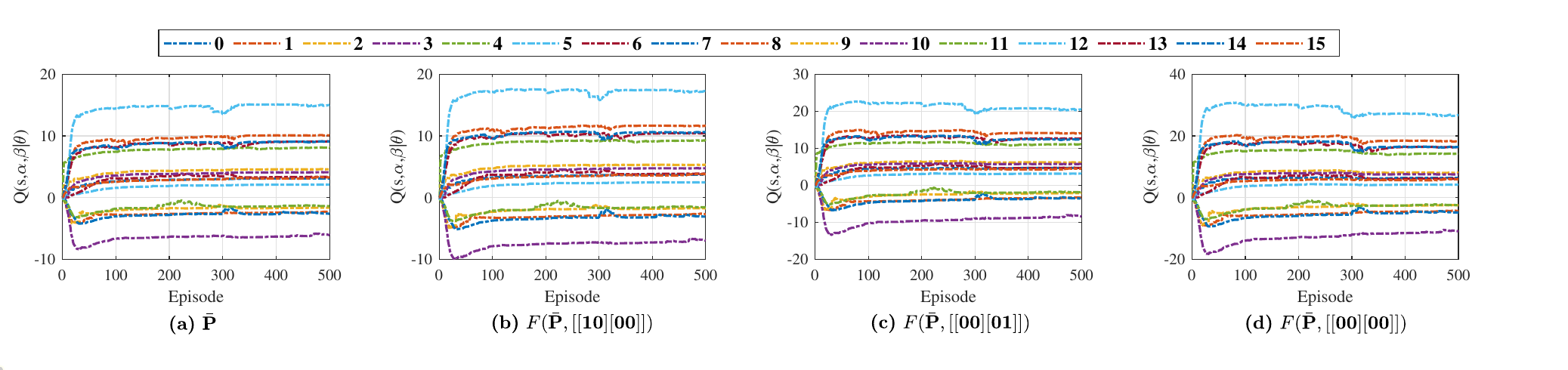}
\caption{{$Q(s,\bm{\alpha},\bm{\beta}|\theta)$ in four state duiring centralized learning under open-loop case, respectively.}}\label{open_loop_contribute}
\end{figure*}

By transmitting data through the channel in \textbf{state 1}, the consumer electronic device $1$ consumes $c_{1}=7$ and the consumer electronic device $2$ consumes $c_{2}=5$. They have no consumption through the channel in \textbf{state 0}. The DoS attacker attacks the first and the second channel represently cost $c_{\beta}^{1}=c_{\beta}^{2}=6$. We set the learning rate of the network to be 0.1 with a discount factor of 0.8 and $\epsilon =0.9$. The combination of consumer electronic devices and attacker actions is set as \{0,1,...,15\}, whose the first two bits represent $\bm{\alpha}$ and the last two represent $\bm{\beta}$.

Fig.\ref{open_loop_contribute} shows the $Q(s,\bm{\alpha},\bm{\beta}|\theta)$ at state $\bar{\mathbf{P}}$, $F(\bar{\mathbf{P}},\begin{bmatrix}1 & 0 \\ 0 & 0 \end{bmatrix}),F(\bar{\mathbf{P}},\begin{bmatrix}0 & 0 \\ 0 & 1 \end{bmatrix}),F(\bar{\mathbf{P}},\begin{bmatrix}0 & 0 \\ 0 & 0 \end{bmatrix})$ in the learning process by centralized Minimax-DQN. Each color line represents the Q value of each action combination. We can see that of all states in Fig. \ref{open_loop_contribute}, the Q value of action 3 is the smallest after training. Therefore, consumer electronic devices are more interested in selecting the action corresponding to this Q value, i.e. $\bm{\alpha}=(0,0)$. Considering the action of consumer electronic devices, the Q value of action 0 is the largest, so the attacker is more inclined to choose the action corresponding to the Q value, that is, $\bm{\beta}=(0,0)$. The NE in these states is that consumer electronic devices selects \textbf{state 0} of channel $1$ and channel $2$ to transmit, while the attacker doesn't attack channel $1$ and channel $2$.

In distributed Minimax-DQN, consumer electronic devices and the DoS attacker use their own neural networks to learn. We set the learning rate of the DoS attacker network to be 0.01 with a discount factor of 0.8, and the learning rate of the consumer electronic device network to be 0.01 with a discount factor of 0.8. Q values of the consumer electronic devices and attacker in $\bar{\mathbf{P}}$ and $F(\bar{\mathbf{P}}, \begin{bmatrix}1 & 0 \\ 0 & 0 \end{bmatrix})$ states are shown in Fig. \ref{open_loop_distribute}. As can be seen from the figure, consumer electronic devices select the \textbf{state 0} of channel $1$ and channel $2$ for transmission, and the attacker chooses not to attack the channel is the NE of the two states. The same NE is found using distributed Minimax-DQN as using centralized Minimax-DQN. Also we can see that both states in Fig. \ref{open_loop_distribute} , the Q values of consumer electronic devices and attacker eventually converge and the convergency values are shown in Table \ref{tab:table2}, {where the bold is the convergence of Q-value reaching NE}. From  Fig. \ref{open_loop_distribute_loss}, we can see that the neural network loss of consumer electronic devices and the attacker has reached convergence before 500 iterations and the convergence values of \textbf{1.066} and \textbf{1.191}, respectively. This shows that distributed Minimax-DQN performs well in convergence speed. The  Fig. \ref{compare_strategy} shows that the centralized and distributed Minimax-DQN proposed can find NE almost simultaneously from the steady state, which is faster than the NE Q-learning algorithm proposed by Ding et al\cite{ding2020defensive}. {However, this paper uses the data of the experience replay pool, enabling to enhance exploration efficiency in the high-dimensional state space, thus enabling to find a more reasonable and superior NE.} It can be seen from the figure that the overall NE Q-learning algorithm still has large fluctuations in strategy selection, while the two algorithms proposed are relatively stable. In terms of the speed and stability of finding NE, the two algorithms proposed are equivalent to Distributed Reinforcement Learning Algorithm of Dai et al.\cite{dai2020distributed}. {Under open-loop experimental configuration, consumer electronic devices take into account the rapid convergence of the Kalman filter after an error, which enables consumer electronic devices to effectively maintain the stability and performance of the system without frequently selecting \textbf{state 1}.  Therefore, consumer electronic devices are more willing to choose the \textbf{state 0} for transmission. The DoS attacker will also consider the problem of rapid system recovery. In order to avoid wasting energy and resources in continuous attacks, the DoS attacker is more willing to choose the strategy of no attack by choosing a right time attack to improve the attack efficiency.}

\begin{figure*}[tp]
\centering
\includegraphics[width=1\linewidth]{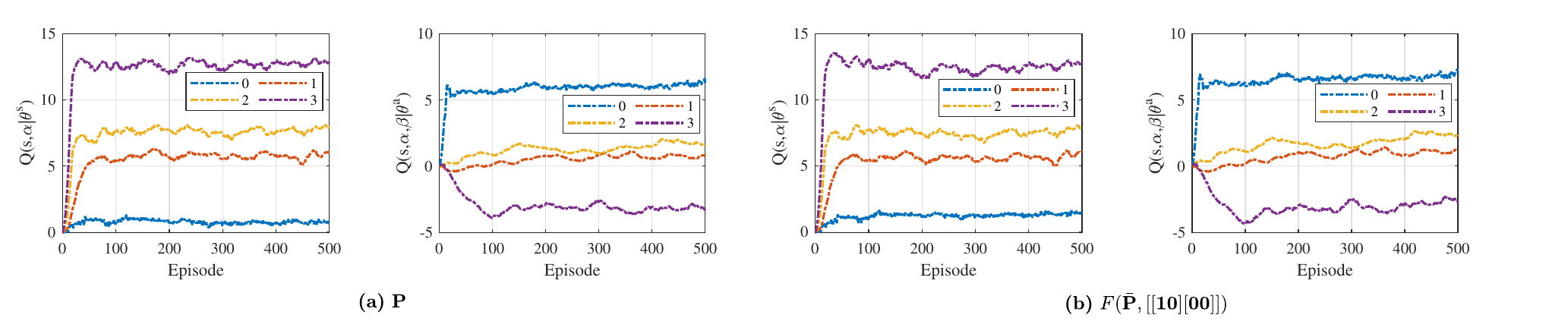}
\caption{{Q-value of consumer electronic devices and DoS attacker in two states during the distributed learning under open-loop case, respectively.}} \label{open_loop_distribute}
\end{figure*}

\begin{figure}[tp]
\flushleft
\includegraphics[width=1\linewidth]{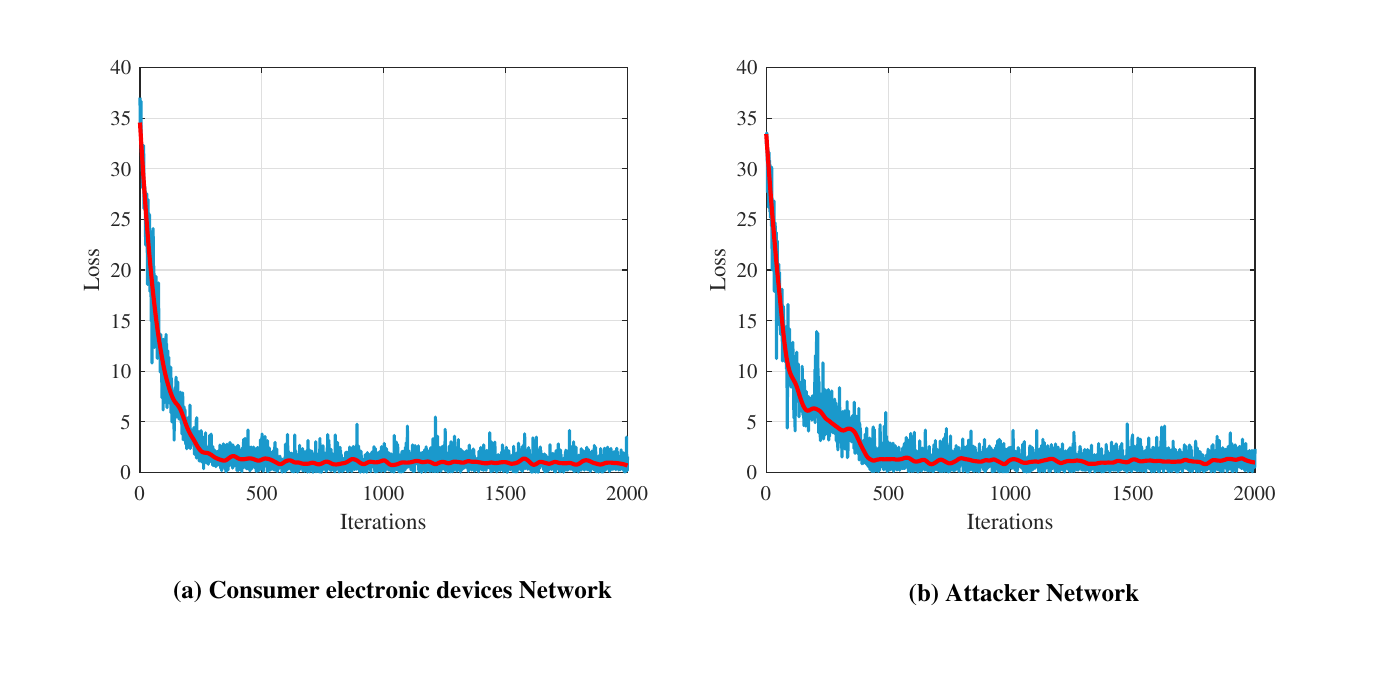}
\caption{The neural network loss of consumer electronic devices and the attacker.} \label{open_loop_distribute_loss}
\end{figure}
\begin{figure}[tp]
\flushleft
\includegraphics[width=1\linewidth]{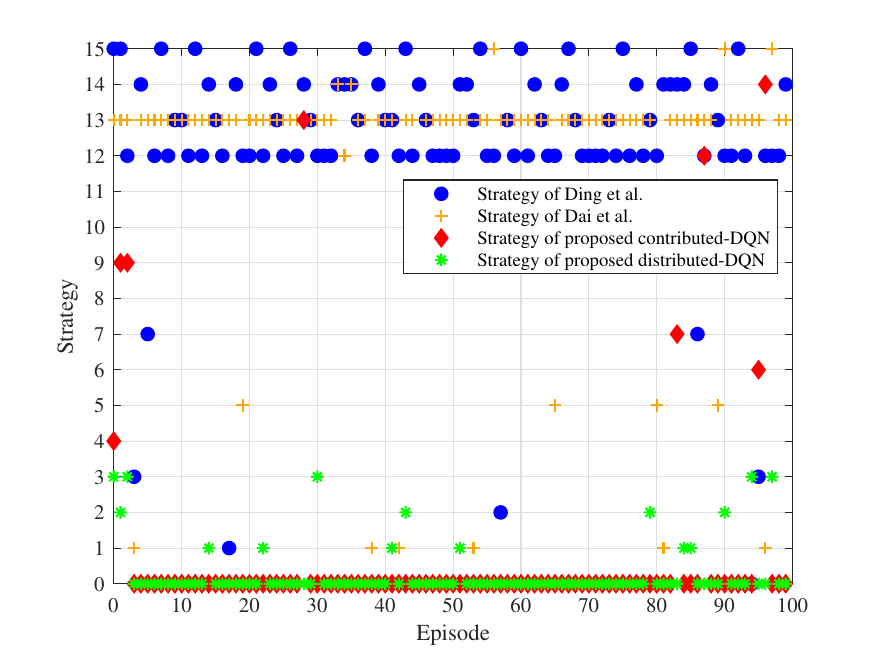}
\caption{Comparison of the performance of different methods to find NE.} \label{compare_strategy}
\end{figure}
\begin{figure*}[tp]
\centering
\includegraphics[width=1\linewidth]{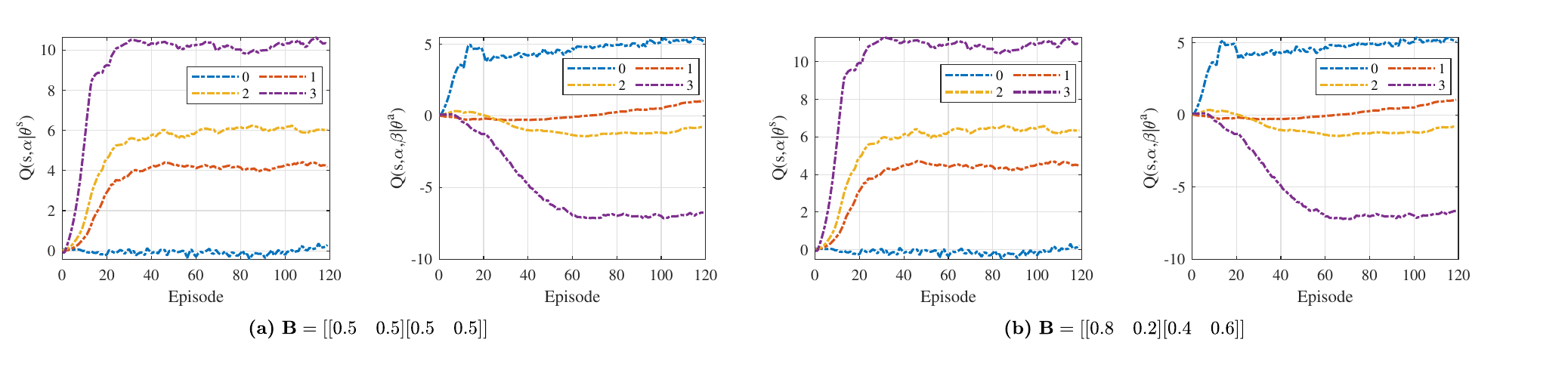}
\caption{{Q-value of consumer electronic devices and DoS attacker in two states during the distributed learning under closed-loop case, respectively.}} \label{close_loop_distribute}
\end{figure*}

\subsection{Closed-loop Case}
In this section, {the more advantageous} distributed Minimax-DQN is used for experiments based on belief state space in a closed loop, considering two consumer electronic devices and a DoS attacker. To reduce computation, we define the set of states as a finite set $\{0,1... m\}$ and take $m=1$. It is defined that the power required for consumer electronic device $1$ to transmit through channel $1$ in \textbf{state 0} is $a_{1}^{0}=0.3$ and in \textbf{state 1} is $a_{1}^{1}=0.7$. The power required for the consumer electronic device to select a channel in \textbf{state 0} is $a_{2}^{0}=0.2$, and in \textbf{state 1} is $a_{2}^{1}=0.8$. Similarly, an attacker is defined to consume 0.5 power to attack any channel, and no power is required if no attack is performed. That is {$b_{1}^{0}=b_{2}^{0}=0$ and $b_{1}^{1}=b_{2}^{1}=0.5$}. At the same time, assuming the channel noise $n_{0}\sim\mathcal{N}(0,0.1)$, the function of \eqref{dsf125} is defined as $f(x)=\frac{1}{e^{x}}$. The initial belief state is $\mathbf{B}_{0}=\begin{bmatrix}0.5 &0.5 \\ 0.5 & 0.5 \end{bmatrix}$. In the closed-loop distributed Minimax-DQN, other network parameters are the same as in the open-loop distributed Minimax-DQN.

In this paper, we make an experiment that the Q value change of DoS attacker and consumer electronic devices in a state of belief $\mathbf{B}=\begin{bmatrix}0.5 &0.5 \\ 0.5 & 0.5 \end{bmatrix}$ and $\mathbf{B}=\begin{bmatrix}0.8 &0.2 \\ 0.6 & 0.4 \end{bmatrix}$, as shown in Fig. \ref{close_loop_distribute}. {A large number of experimental data reveals the following results: In belief state $\mathbf{B}=[[0.5,0.5],[0.5,0.5]]$, the attacker's largest value of $Q(s,\bm{\beta}|\theta_{k}^{a})$ is $\mathbf{5.305}$, and the consumer electronic device's smallest value of $Q(s,\bm{\alpha}|\theta_{k}^{s})$ is $\mathbf{0.197}$. In the belief state $\mathbf{B}=[[0.8,0.2],[0.6,0.4]]$, the attacker's largest value of $Q(s,\bm{\beta}|\theta_{k}^{a})$ is $\mathbf{5.216}$, and the consumer electronic device's smallest value of $Q(s,\bm{\alpha}|\theta_{k}^{s})$ is $\mathbf{0.131}$}. The algorithm basically converges in the final and the NE of both states $\mathbf{B}=[[0.5,0.5],[0.5,0.5]]$ and $\mathbf{B}=[[0.8,0.2],[0.6,0.4]]$ is that consumer electronic devices adopt the channel in \textbf{state 0} for transmission and the attacker does not attack any channel.
\section{Conclusions}\label{conclusion}
\vspace{1.0em}
In this paper, we proposed a distributed RSE model tailored for electronic consumer IoT, addressing critical security challenges posed by DoS attacks. By leveraging multiple consumer electronic devices to measure the same system target, the model utilized a centralized Kalman filter at the remote estimator, effectively reducing consumer electronic device computational load and mitigating risks of data leakage. To address the adversarial strategies between consumer electronic devices and DoS attackers, we introduced centralized and distributed Minimax-DQN algorithms, employing NE frameworks under both open-loop and closed-loop scenarios. These methods demonstrated superior adaptability to high-dimensional data and complex environments compared to traditional Q-learning solutions. Experimental results validated the effectiveness and stability of our approach, showing faster convergence and improved performance in finding NE. {The ability of centralized and distributed Minimax-DQN to schedule policies from both sides of the offense and defense in resource-constrained environments further enhances its practicality for large-scale deployment.} This work provides a robust foundation for enhancing the security and scalability of IoT networks, contributing to the development of secure, real-time monitoring and decision-making systems in consumer electronics. Future research can extend the model applications in predictive maintenance, scalability, and advanced IoT scenarios.

\section{Acknowledgments}
This work was supported by the Science and Technology Major Project of Tibetan Autonomous Region of China under Grant No. XZ202201ZD0006G02, the Basic and Applied Basic Research Foundation of Guangdong Province (No.2024A1515012398) and the National Innovation and Entrepreneurship Training Program For Undergraduate No. 202310559107. The work of Zahid Khan and Wadii Boulila is supported by Prince Sultan University, Riyadh, Saudi Arabia.

\vspace{0.9em}

\vspace{0.9em}

\end{CJK}
\end{document}